\DeclareMathOperator{\relu}{ReLU}
\DeclareMathOperator{\hull}{Hull}
\DeclareMathOperator{\rank}{rank}
\DeclareMathOperator*{\argmin}{arg\,min}
\newcommand{\norm}[1]{\left\Vert #1 \right\Vert}
\newcommand{\ip}[2]{\left\langle #1, #2 \right\rangle}
\newcommand{\bR}{\mathbb{R}}
\newcommand{\cA}{\mathcal{A}}
\newcommand{\cB}{\mathcal{B}}
\newcommand{\cS}{\mathcal{S}}
\newtheorem{theorem}{Theorem}
\newtheorem{lemma}[theorem]{Lemma}
\newtheorem{corollary}[theorem]{Corollary}
\newtheorem{definition}[theorem]{Definition}
\newtheorem{remark}[theorem]{Remark}
\newenvironment{proof}{{\textbf{Proof\ }}}{~\hfill $\square$}
\newenvironment{proof2}{{\textbf{Proof\ }}}{~\hfill }
\begin{document}
\title{Singular Values for ReLU Layers\\}

\author{S\"oren Dittmer, Emily J.\ King, Peter Maass}

\markboth{}
{Shell \MakeLowercase{\textit{et al.}}: Singular Values for ReLU Layers}

\maketitle

\begin{abstract}
Despite their prevalence in neural networks we still lack a thorough theoretical characterization of ReLU layers. This paper aims to further our understanding of ReLU layers by studying how the activation function ReLU interacts with the linear component of the layer and what role this interaction plays in the success of the neural network in achieving its intended task. To this end, we introduce two new tools: ReLU singular values of operators and the Gaussian mean width of operators. By presenting on the one hand theoretical justifications, results, and interpretations of these two concepts and on the other hand numerical experiments and results of the ReLU singular values and the Gaussian mean width being applied to trained neural networks, we hope to give a comprehensive, singular-value-centric view of ReLU layers. We find that ReLU singular values and the Gaussian mean width do not only enable theoretical insights, but also provide one with metrics which seem promising for practical applications. In particular, these measures can be used to distinguish correctly and incorrectly classified data as it traverses the network.  We conclude by introducing two tools based on our findings: double-layers and harmonic pruning.

\end{abstract}

\begin{IEEEkeywords}
Neural Networks, Gaussian Mean Width, n-Width, ReLU, Singular Values
\end{IEEEkeywords}
\IEEEpeerreviewmaketitle

\section{Introduction}
\label{sec:intro}
\subsection{Motivation and Overview}
Singular values are an indispensable tool in the study of matrices and their applications. Not only are they used in data science, e.g.\ in principal component analysis~\cite{jolliffe2011principal} and low-rank approximations in general~\cite{golub1987generalization}, but they are also used in the computation of the generalized inverse~\cite{penrose1955generalized}, signal processing~\cite{comon1990tracking}, and the analysis and regularization of inverse problems~\cite{louis1990mollifier}, as well as countless other fields. Since the usual singular values are only applicable in the linear setting, they are unfortunately
not directly suitable for the analysis of nonlinear neural networks. This paper addresses this situation in two ways -- namely by defining ReLU singular values and by defining the Gaussian mean width for operators. We start by generalizing the operator norm to the class of nonnegatively homogeneous operators which includes operators of the form
\begin{equation}
	\relu(A\cdot),
\end{equation}
where $A\in\mathbb{R}^{m\times n}$ is a matrix and ReLU is defined component-wise via $\relu(x):=\max(0, x)$~\cite{nair2010rectified}. Then by leveraging certain traits of singular values, we define ReLU singular values $s_k\in\mathbb{R}_{\ge0}$, $k=0,1,\dots,\min\{m,n\}-1$ given by
\begin{equation}
	\label{eq:def_rsv}
	s_k = \min_{\rank L \le k}\max_{x\in\mathcal{B}} ||\relu(Ax)-\relu(Lx)||_2.
\end{equation}
Here $L\in\mathbb{R}^{m\times n}$ is a matrix with an upper bounded rank and $\mathcal{B}$ the unit ball. After giving a simple but useful analytical bound on ReLU singular values we explore their behavior numerically and see that ReLU layers act like functions with lower singular values than the linear component of the layer would seem to indicate.  This realization motivates the new tool of harmonic pruning.  In Section~\ref{sec:bias},  ReLU singular values are further generalized to allow bias in the ReLU layer and be calculated over actual data rather than general mathematical domains.  The data-dependent, biased ReLU singular values of correctly classified and incorrectly classified data sets as they travel through a trained network reveals structural differences in how networks treat such data. We follow up on this phenomenon in Section~\ref{sec:gmw} where we use the Gaussian mean width~\cite{vershynin2015estimation} to explore the effective dimension of layers and derive how this dimensionality is related to the singular values of linear layers and how the Gaussian mean width can be interpreted with regard to ReLU layers. One of the tools we employ in this analysis is the new concept of Gaussian mean width of operators. We also show that the Gaussian mean width reveals big differences in how networks handle correctly and incorrectly classified data. Before we conclude the paper in Section~\ref{sec:conclusion} we discuss in Section~\ref{sec:app} practical applications of our findings, i.e.\ so-called double-layers and with them harmonic pruning.

In a sense this paper provides a study of sparsity, but unlike most papers on sparsity in neural networks, it is concerned with the sparsity of the spectrum.

Finally, we would like to point out that we provide code for calculating the Gaussian mean width~\cite{code_gaussian_mean_width} and bounds on ReLU singular values~\cite{code_relu_singular_values}. 
\subsection{Related Work}
Although there are generalizations of singular values to the nonlinear setting like \cite{fujimoto2004singular}, in which the nonlinear operators are assumed to be differentiable; there do not seem to be generalizations appropriate for ReLU layers. However, there are a multitude of applied papers documenting and leveraging positive effects of so-called linear bottlenecks in neural networks, e.g.~\cite{MobileNetV2,ba2014deep,sainath2013low}, which we will see are closely connected to ReLU singular values and double-layers. Linear bottlenecks in a network are created by two linear layers (i.e., with no activation function) enforcing a low dimensional representation. These layers are mostly used to deal with huge output dimensions to decrease the number of trainable parameters~\cite{sainath2013low, zeiler2013stochastic} but also sometimes to boost performance~\cite{MobileNetV2, ba2014deep}. Note that bottlenecks, albeit nonlinear ones, are also very common in ResNet residual blocks, e.g.~\cite{he2016deep}. Another related paper which is less concerned with bottlenecks and more concerned with a restriction of the parameters of a neural network is the paper~\cite{li2018measuring}, in which the authors demonstrate that one can train a network with virtually no loss in performance when restricting the network's parameter updates to a very low dimensional subspace of the original parameter space. Additionally the paper~\cite{rippel2015spectral} ties our approach also to pooling methods in convolutional neural networks (CNNs). With regard to the Gaussian mean width we would like to point to~\cite{li2018measuring} as a general source and to~\cite{GiSaBr16} as a paper that applies the Gaussian mean width to untrained neural networks with i.i.d.~Gaussian weights. While not directly applicable to the topics discussed here, we would still like to reference paper~\cite{sedghi2018the}, where the authors investigate the singular values of convolutional layers, ignoring the activation function. 

\section{ReLU Singular Values}
\label{sec:relu_svs}
\subsection{Theory -- A Generalization of Singular Values.}
We would like to open this section by reviewing what singular values are and -- more importantly -- what their essential property might be. We start by stating the following common definition of singular values (see, e.g.,~\cite{gohberg1988introduction}). Let $A\in\mathbb{R}^{m\times n}$ be a real matrix, we then define its $k$th singular value $k=0,1,\hdots, \min\{m,n\}-1$ (Note that we start the indexing at $0$.) as
\begin{align}
	\sigma_k(A) :&= \min_{L\in\mathbb{R}^{m\times n}:\rank L \le k} \max_{x\in\mathcal{B}} ||Ax-Lx||_2 \\
	&= \min_{L\in\mathbb{R}^{m\times n}:\rank(L)\le k} ||A-L||_*,
\end{align}
where $\mathcal{B}\subset\mathbb{R}^n$ is the unit ball and $||\cdot||_\ast$ the operator norm.
This can be phrased as:
\begin{center}
\begin{minipage}{.45\textwidth}
 \textbf{The $k$th singular value of a (linear) operator $A:\mathbb{R}^n\to\mathbb{R}^m$ is the minimal operator norm of $A-L$ with regards to $L$, where $L$ is a (linear) operator $\mathbb{R}^n\to\mathbb{R}^m$ of rank $\le k$.}
\end{minipage}
\end{center}
We would like to draw the reader's attention to the point that, apart from its reliance on the notions $\rank$ and operator norm, this definition is fairly general insofar as it is not inextricably linked to the linearity of the operators involved. This is the first of two key observations we will use for our generalization. The second key observation relates to operator norms and is exemplified by the following equalities: Let $A\in\mathbb{R}^{m\times n}$ be a real matrix.  Then we than can write
\begin{equation}
	||A||_\ast = \max_{x\in\cB} ||Ax||_2 = \max_{x\in\mathcal{S}} ||Ax||_2 = \max_{x\in\mathbb{R}^n\setminus\{0\}} \frac{||Ax||_2}{||x||_2},
\end{equation}
where $\mathcal{S}\subset{\mathbb{R}^n}$ is the unit sphere. The equivalence of these formulations suggests that a key aspect of the operator norm is its positive homogeneity (i.e., its covariant behavior with regard to the multiplication by positive factors) and thereby the representativeness of the operator's restriction to $\mathcal{B}$ and even $\mathcal{S}$. Inspired by this second key observation we will now define nonnegatively homogeneous operators.
\begin{definition} 
Let $V$ and $W$ be Banach spaces over $\mathbb{R}$. A (possibly nonlinear) operator $\cA:V\to W$ is called \textbf{nonnegatively homogeneous} if $\cA\alpha v = \alpha \cA v$ for all $\alpha \ge 0$ and $v\in V$. We then denote the space of all nonnegatively homogeneous operators from $V$ to $W$ as $H_+(V,W)$. 
\end{definition}
Obviously every nonnegatively homogeneous operator is positively homogeneous and every linear operator is nonnegatively homogeneous. Most notably for this paper, $\relu(A\cdot)$, where $A$ is a real matrix, is also nonnegatively homogeneous. For an illustration of $\relu(A\mathcal{S})$, where
\[A=\begin{pmatrix} 0 & 1 \\ 1 & -1 \end{pmatrix}\]
see Figure~\ref{fig:relu_layer}.
\begin{figure}
\center
\includegraphics[width=0.45\textwidth, angle=0]{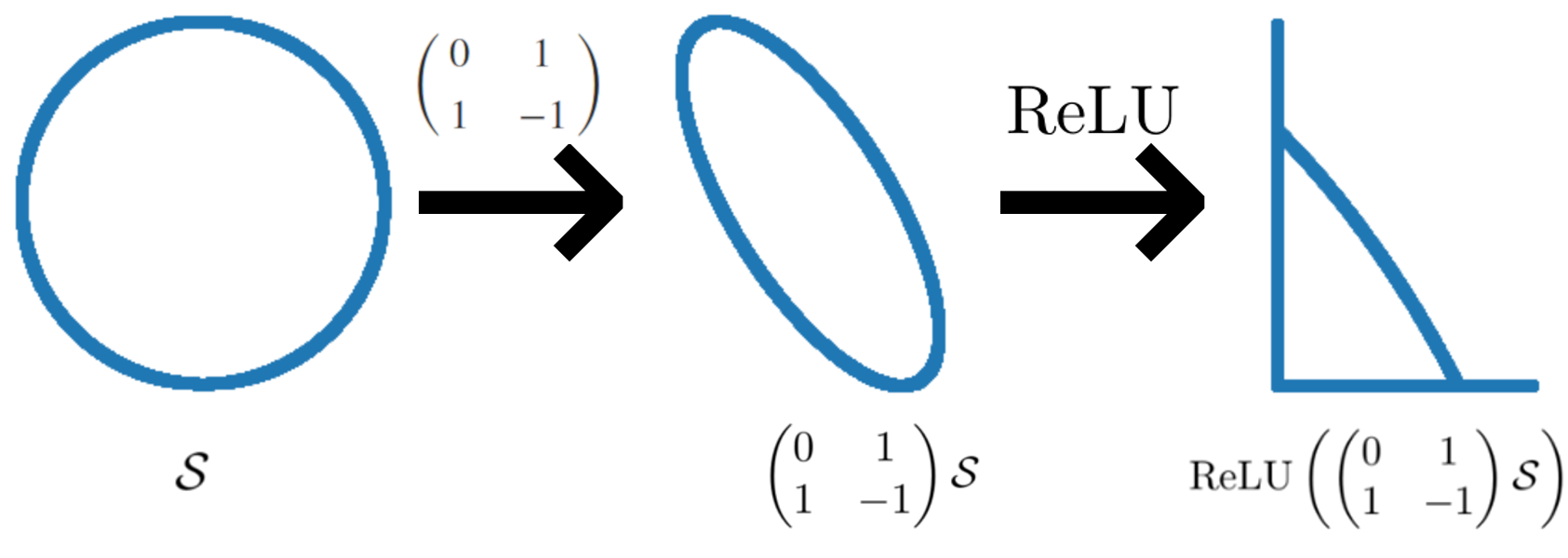}
\caption{Example of how the unit sphere gets mapped by a ReLU layer (without bias). Note that the higher the
dimensions the layer operates in the more points of $\mathcal{S}$ will be mapped to the sparse ``tentacles'' on the right.}
\label{fig:relu_layer}
\end{figure}
Having now defined nonnegatively homogeneous operators we are able to extend the usual definition of operator norms.
\begin{definition}
\label{def:op_norm}
Let $V$ and $W$ be Banach spaces over $\mathbb{R}$. The \textbf{operator norm} of an $\cA\in H_+(V,W)$ is defined as
\begin{equation}
\label{eq:operator_norm}
||\cA||_\ast := \sup_{v\in \mathcal{B}} ||\cA v||_W,
\end{equation}
where $\mathcal{B}$ is the unit ball of $V$.
\end{definition}
In Section~\ref{sec:gmw}, we will see the operator norm of a general function in $H_+(V,W)$ again.  For now, we will focus on the functions in $H_+(\bR^n,\bR^m)$ of the form $x \mapsto \relu(Ax)$ for some matrix $A \in \bR^{m \times n}$ and generalize singular values to such maps.
\begin{definition}
Let $\cA \in H_+(\bR^n,\bR^m)$ be of the form $x \mapsto \relu(Ax)$ for some matrix $A \in \bR^{m \times n}$.  For $k = 0, 1, \hdots, \min\{m,n\}-1$ we define
\begin{align*}
s_k(\cA) &= s_k = \min_{\rank L \le k} \max_{x\in\mathcal{B}} ||\relu(Ax) - \relu(Lx)||_2 \\
&= \min_{\rank L \le k} \norm{\relu(A\cdot) - \relu(L\cdot)}_\ast.
\end{align*}
\end{definition}
Like for linear singular values we have the relation
\begin{equation}
s_{k+1}(\cA) \le s_k(\cA).
\end{equation}
Although the calculation of ReLU singular values seems in general intractable, we may compute an upper bound with the help of the following lemmata.
\begin{lemma}
\label{lemma:upper_bound_a}
For $x, y \in\mathbb{R}^n$ we have 
\begin{equation}
||\relu x - \relu y|| \le ||x - y||.
\end{equation}
\end{lemma}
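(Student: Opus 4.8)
The plan is to reduce the vector inequality to a one-dimensional statement, exploiting two facts: that ReLU acts component-wise, so that $(\relu x)_i = \max(0, x_i)$, and that the squared Euclidean norm splits as a sum over coordinates, $\norm{x-y}^2 = \sum_{i=1}^n (x_i - y_i)^2$. Consequently it suffices to show that the scalar rectifier $t \mapsto \max(0,t)$ is non-expansive, i.e.\ that $|\max(0,s) - \max(0,t)| \le |s-t|$ for all $s,t\in\bR$. Squaring this coordinate-wise inequality and summing over $i$ then yields $\norm{\relu x - \relu y}^2 \le \norm{x-y}^2$, which is exactly the claim.

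For the scalar inequality I would use the identity $\max(0,t) = \tfrac{1}{2}(t + |t|)$. Then $\max(0,s) - \max(0,t) = \tfrac{1}{2}\big((s-t) + (|s| - |t|)\big)$, and the triangle inequality together with the reverse triangle inequality $\big| |s| - |t| \big| \le |s-t|$ gives $|\max(0,s) - \max(0,t)| \le \tfrac{1}{2}(|s-t| + |s-t|) = |s-t|$. Alternatively one can dispatch this by a short case analysis on the four sign patterns of $s$ and $t$; in each case the left-hand side equals $0$, equals $|s-t|$, or reduces to the absolute value of a single term bounded by $|s-t|$.

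Honestly there is no serious obstacle here: the entire content is the elementary non-expansiveness of the scalar ReLU, and the vector case follows purely formally from the coordinate-wise structure. The only point that warrants a moment's care is the reduction itself, namely making explicit that because both the nonlinearity and the squared Euclidean norm decouple across coordinates, a per-coordinate bound transfers to the full vector with no cross terms. With the scalar estimate in hand, assembling the result is immediate.
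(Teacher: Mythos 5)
Your proof is correct, and the overall structure matches the paper's: both reduce the vector inequality to the scalar one by observing that ReLU and the squared Euclidean norm decouple across coordinates, so it suffices to show $t \mapsto \max(0,t)$ is non-expansive. Where you genuinely differ is in how that scalar step is settled. The paper verifies $(\relu a - \relu b)^2 \le (a-b)^2$ by the four-way case analysis on the signs of $a$ and $b$ --- the route you mention only as a fallback --- whereas your primary argument uses the identity $\max(0,t) = \tfrac{1}{2}\left(t + |t|\right)$ together with the triangle and reverse triangle inequalities. Your identity route is arguably cleaner: it avoids enumeration and exhibits the scalar ReLU as an average of two $1$-Lipschitz maps, which also transfers immediately to leaky ReLU (relevant to the paper's later remark that Lemma~\ref{lemma:upper_bound_a} survives that replacement, via $\max(\alpha t, t) = \tfrac{1+\alpha}{2}t + \tfrac{1-\alpha}{2}|t|$ for $0 \le \alpha \le 1$). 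The paper's table, on the other hand, is maximally elementary and self-contained. Either argument closes the lemma; there is no gap in yours.
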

\begin{proof2}
Since ReLU is applied component-wise, it is sufficient to show that 
\vspace{-2mm}
\[(\relu a - \relu b)^2 \le (a - b)^2,\]
for $a, b \in\mathbb{R}$.
This can is shown by the following analysis of the possible cases.
\begin{tabular}{c | c c }
& $\left(\relu a - \relu b\right)^2$ & $(a-b)^2$ \\
\hline
\makecell{$a\ge0$, $b\ge0$} & $(a-b)^2$ & $(a-b)^2$ \\
\makecell{$a\le0$, $b\le0$} & $0$ & $(a-b)^2$ \\ 
\makecell{$a\le0$, $b\ge0$} & $b^2$ & $(|a|+|b|)^2$ \\ 
\makecell{$a\ge0$, $b\le0$} & $a^2$ & $(|a|+|b|)^2$ 
\end{tabular}\\
$\text{\hspace{83mm}} \square$
\end{proof2}

This lemma allows us to compute the following upper bound.
\begin{lemma}
\label{lemma:upper_bound}
Let $\mathcal{A}\in H_+(\bR^n,\bR^m)$ be given via $\mathcal{A}:=\relu(A\cdot),$
then \[s_k(\mathcal{A})\le\sigma_{k}(A).\]
\end{lemma}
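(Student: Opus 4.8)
The plan is to compare the two optimization problems defining $s_k(\cA)$ and $\sigma_k(A)$ by exploiting two facts: they range over the \emph{same} feasible set of matrices $L$ with $\rank L \le k$, and Lemma~\ref{lemma:upper_bound_a} makes the ReLU objective pointwise no larger than the linear objective. The essential move is the standard trick of testing the minimizer of one problem inside the other.

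First I would fix an arbitrary $L \in \bR^{m\times n}$ with $\rank L \le k$ and an arbitrary $x \in \cB$. Applying Lemma~\ref{lemma:upper_bound_a} with the substitution $x \mapsto Ax$ and $y \mapsto Lx$ yields
\[
\norm{\relu(Ax) - \relu(Lx)}_2 \le \norm{Ax - Lx}_2 .
\]
Taking the maximum over $x \in \cB$ on both sides (the right-hand maximum can only grow) gives
\[
\max_{x\in\cB}\norm{\relu(Ax) - \relu(Lx)}_2 \le \max_{x\in\cB}\norm{Ax - Lx}_2
\]
for every admissible $L$.

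Next I would let $L^\ast$ be a matrix of rank $\le k$ attaining the minimum in the definition of $\sigma_k(A)$, so that the right-hand side above equals $\sigma_k(A)$ when $L = L^\ast$. Since $L^\ast$ is itself a feasible competitor in the minimization defining $s_k(\cA)$, we obtain
\[
s_k(\cA) \le \max_{x\in\cB}\norm{\relu(Ax) - \relu(L^\ast x)}_2 \le \max_{x\in\cB}\norm{Ax - L^\ast x}_2 = \sigma_k(A),
\]
which is the claimed bound.

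There is no serious obstacle here; the content is carried entirely by Lemma~\ref{lemma:upper_bound_a}. The one point that requires care is the passage through the outer minimum: one cannot naively ``take $\min_L$'' of the pointwise inequality and expect the two minimizers to coincide, so it is cleanest to substitute the optimal linear $L^\ast$ into the ReLU problem and observe that it is merely a (possibly suboptimal) choice there. If one prefers the operator-norm phrasing of the definition, the same argument reads as $\norm{\relu(A\cdot) - \relu(L\cdot)}_\ast \le \norm{A-L}_\ast$ for each $L$, followed by the identical minimization over $\rank L \le k$.
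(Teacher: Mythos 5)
Your proposal is correct and follows essentially the same route as the paper: apply Lemma~\ref{lemma:upper_bound_a} pointwise to get $\norm{\relu(Ax)-\relu(Lx)}_2 \le \norm{(A-L)x}_2$, pass to the maximum over $\cB$, and then compare the two minimizations over $\rank L \le k$. The only cosmetic difference is that the paper takes the minimum of both sides of the inequality directly (valid since $f \le g$ pointwise implies $\min f \le \min g$, with no need for the minimizers to coincide), whereas you substitute the linear minimizer $L^\ast$ explicitly; the two phrasings are the same argument.
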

\begin{proof}
For any $B\in\mathbb{R}^{m\times n}$ we have:
\begin{align*}
&\;\;\;\; \max_{x\in\mathcal{B}} ||\relu(Ax) - \relu(Bx)||_2\\
&\le \max_{x\in\mathcal{B}} ||Ax-Bx||_2\\
&\le \max_{x\in\mathcal{B}} ||(A-B)x||_2\\
&= ||A-B||_\ast
\end{align*}
This gives us
\begin{align*}
s_k &= \min_{\rank B \le k} \max_{x\in\mathcal{B}} ||\relu(Ax) - \relu(Bx)||_2\\
&\le \min_{\rank B \le k} ||A-B||_\ast = \sigma_{k}(A)
\end{align*}
\end{proof}

While Lemma~\ref{lemma:upper_bound} gives a direct and intuitive theoretical connection between (linear) singular values and ReLU singular values, we will compute tighter bounds in the following section.
We would like to close this subsection with the following remarks:
\begin{remark}
Although ReLU singular values do not seem to admit a straightforward way to also generalize the singular value decomposition (SVD), they allow for a straightforward definition of a sequence of operators one would associate with the operators given by a truncated SVD in the linear case.
\end{remark}
\begin{remark}
One of the most useful ways to think about a ReLU singular value $s_k(\mathcal{A})$ is to interpret it as the worst case error one has when approximating $\mathcal{A}$ with a rank $k$ approximation. 
\end{remark}
\begin{remark}
The concept of ReLU singular values could be easily extended to arbitrary nonnegatively homogeneous operators, e.g.\ leaky ReLU~\cite{he2015delving} layers. In fact, one can easily see that Lemmas~\ref{lemma:upper_bound_a} and~\ref{lemma:upper_bound} still hold when ReLU is replaced with leaky ReLU.
\end{remark}

\subsection{Numerics -- A Short Numerical Exploration of ReLU Singular Values}\label{sec:numer}
\begin{figure*}[t!]
\begin{subfigure}{0.24\textwidth}
\includegraphics[width=\linewidth]{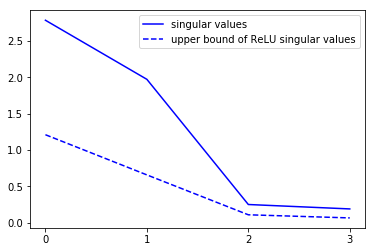}
\caption{Normal($\mu=0$, $\sigma^2=1$)}
\end{subfigure}\hspace*{\fill}
\begin{subfigure}{0.24\textwidth}
\includegraphics[width=\linewidth]{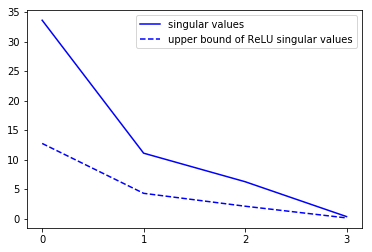}
\caption{Normal($\mu=0, \sigma^2=100$)}
\end{subfigure}\hspace*{\fill}
\begin{subfigure}{0.24\textwidth}
\includegraphics[width=\linewidth]{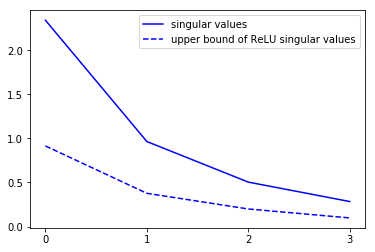}
\caption{Uniform($0, 1$)}
\end{subfigure}\hspace*{\fill}
\begin{subfigure}{0.24\textwidth}
\includegraphics[width=\linewidth]{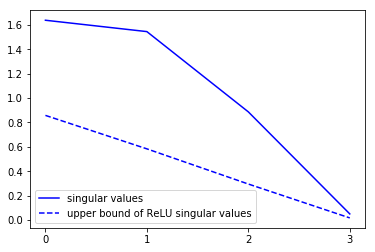}
\caption{Uniform($-1, 1$)}
\end{subfigure}
\caption{Each plot depicts the singular value curve of a matrix $A\in\mathbb{R}^{4\times 4}$ and the corresponding upper bounds of the ReLU singular values of $\relu(A\cdot)$ computing via the method outlines in Section~\ref{sec:numer}. The entries of each respective $A$ are i.i.d.~sampled from different distributions for each $A$ respectively. The sampled distributions are labeled beneath the individual plots.}
\label{fig:singular_values_vs_relu_singular_values}
\vspace{-5mm}
\end{figure*}
We will now describe a simple numerical method for approximating upper bounds of ReLU singular values that is stronger than Lemma~\ref{lemma:upper_bound}. We will then utilize this method to compare these bounds with the singular values of the weight matrices for some random ReLU layers of the form $\relu(A\cdot)$. The approximation is a two step process:
\begin{enumerate}
\item Approximate 
\begin{equation}
W_\ast, M_\ast = \displaystyle\argmin_{\substack{W \in \bR^{m\times k}\\M\in \bR^{k \times n}}} \sum_{x\in X} ||\relu(Ax)-\relu(WMx)||_2^2,
\end{equation}
e.g.\ via the minimization of the function with some kind of stochastic gradient descent (in our case Adam~\cite{kingma2014adam}) for $X$ a sufficiently dense finite subset of $\mathcal{S}$.
\item Calculate an approximate upper bound of $s_k$ as
\begin{equation}
\max_{x\in X}||\relu(Ax)-\relu(W_\ast M_\ast x)||_2,
\end{equation}
this is can be done by simply calculating the norm above for each $x\in X$.
\end{enumerate}
The code of the two-step process can be found here~\cite{code_relu_singular_values}. Its computational complexity is dominated by the first step, since the second is a simple linear search. Therefore the algorithm has effectively the overall complexity of Adam~\cite{kingma2014adam}. In practice this means that it has computational costs similar to training a single layer.
This algorithm yields approximations only, since in practice $X$ has to be finite and is therefore only itself an approximation to $\mathcal{S}$. While this is a minor problem in low dimensions, it poses an increasingly severe problem in higher dimensions due to the curse of dimensionality -- we will deal with this in the next subsection. The algorithm yields an approximation of an upper bound, since it simply ``solves'' Equation~\ref{eq:def_rsv} for the suboptimal guess $L = W_\ast M_\ast$. Figure~\ref{fig:singular_values_vs_relu_singular_values} depicts the comparisons of singular value curves and the corresponding upper bounds of ReLU singular values for some random low dimensional ReLU layers of the form $\relu(A\cdot)$. We define the \textbf{singular value curve} of a real matrix $A$ with an SVD $A=U\Sigma V^T$ as the plot of the (decreasingly) ordered diagonal of $\Sigma$ over the horizontal axis, i.e., the decreasingly ordered singular values over their indices. As one can see the numerical calculations result -- especially for lower indices -- in much lower bounds than the linear singular values give us. That is, the approximations of the ReLU singular values show that the composition of ReLU with a linear function effectively acts like a map with smaller and fewer significant singular values.  This phenomenon is a main motivation of the harmonic pruning method presented in Section~\ref{sec:app}.
\section{Data-Dependent, Affine ReLU Singular Values} \label{sec:bias}
As it probably has not escaped the notice of the more practically minded readers, most ReLU layers have biases, i.e.~are of the form
\begin{equation}
\relu(A\cdot+b).
\end{equation}
This forces us to adapt and rethink our definition of the operator norm in Equation~\ref{eq:operator_norm}, since a direct application of it
\[||\relu(A\cdot+b)||_\ast = \max_{x\in\mathcal{B}} ||\relu(Ax+b)||_2,\]
seems hardly appropriate considering that the operator's application to $\mathcal{B}$ does not have to be representative of the overall behavior of the operator. Three approaches come to mind: a modification of the set we are maximizing over to adapt it to the bias; the removal of the bias; and the incorporation of the bias in the weight matrix, transforming the affine setting back into a linear one. The removal of the bias does not seem appropriate since it clearly has an influence on the operator and since there is not a unique way of transforming the affine setting to a linear one we will concentrate on the set we maximize over. On the one hand it is unfortunately also not clear which modification of $\mathcal{B}$ would be appropriate to capture the behavior of the operator over the whole input space. However, one is usually not interested in how a ReLU layer behaves over its entire mathematical domain since in practice its inputs are samples from a tiny subset of the possible input space. On the other hand there already exists a sufficiently dense representation of that subset in the form of the training data of the network, since otherwise one could not have trained the network. We therefore make the following definitions:
\begin{definition}
Let $A\in\mathbb{R}^{m\times n}$ and $b\in\mathbb{R}^m$. We then generalize the operator norm to the operator $\relu(A\cdot+b)$ over the set $X\subset\mathbb{R}^n$ as
\begin{equation}
||\relu(A\cdot+b)||_{\ast, X} = \max_{x\in X} ||\relu(Ax+b)||_2
\end{equation}
and the \textbf{$k$th ReLU singular value of the operator over a given set $X$} as
\begin{equation}
	s_{k, X} = \min_{\rank L \le k}\max_{x\in X} ||\relu(Ax+b)-\relu(Lx+b)||_2.
\end{equation}
Note that the generalization of the norm is itself not a norm.
\end{definition}
These data-dependent ReLU singular values can be numerically upper bounded in an analogous fashion as in the unbiased case. Figure~\ref{fig:in_correct_relu_singular_values} shows the results of two of our numerical tests where we calculated upper bounds of the data-dependent ReLU singular values for each of the layers of two different, trained neural networks, once where $X$ is a set of a certain size only containing correctly classified data points by the network and once where it is of the same size and only contains incorrectly classified data points. Each of the networks is a multilayer perception (MLP) with three hidden ReLU layers and one classification softmax layer trained via a cross-entropy-with-logits loss function and the Adam optimizer~\cite{kingma2014adam} with Tensorflow~\cite{tensorflow2015-whitepaper} standard settings solving a classification task. The training details for the two networks are displayed in Table~\ref{tab:hyper}
\begin{table}
\center
\begin{tabular}{c | c c }
 & HTRU2 & MNIST \\
\hline
batch size & $4$ & $32$ \\
number of epochs & $8\cdot10^6$ & $5\cdot10^6$ \\
initialization & Glorot~\cite{glorot2010understanding,he2015delving} & Glorot \\
final accuracy on test set & $0.98$ & $0.98$ \\
\end{tabular}
\caption{Training hyperparameters of the networks used in Figure~\ref{fig:in_correct_relu_singular_values} and~\ref{fig:gmw_over_time}.}
\label{tab:hyper}
\vspace{-05mm}
\end{table}
\begin{figure}[t!]
\vspace{-5mm}
\begin{subfigure}[t]{0.24\textwidth}
\includegraphics[width=\linewidth]{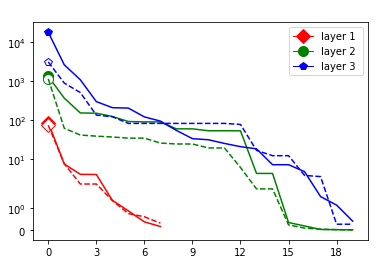}
\caption{The ReLU layers have a width of 20 and solves the binary classification task giving via the dataset HTRU2~\cite{lyon2016fifty}. 
Note that the vertical axis has a log scale for readability.}
\end{subfigure}\hspace*{\fill}
\begin{subfigure}[t]{0.24\textwidth}
\includegraphics[width=\linewidth]{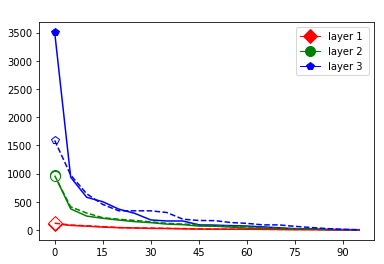}
\caption{The ReLU layers have a width of 100 and the network solves the classical MNIST classification problem~\cite{lecun1998mnist}.} 
\end{subfigure}\hspace*{\fill}

\begin{subfigure}[t]{0.24\textwidth}
\includegraphics[width=\linewidth]{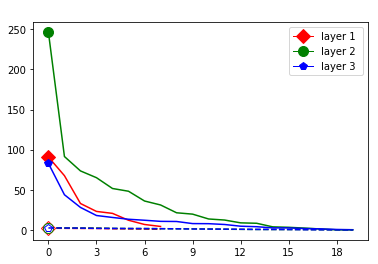}
\caption{The singular value curve of the weight matrices for of the HTRU2 dataset for comparison. The dashed lines are the singular value curves after the initialization, before the training.}
\end{subfigure}\hspace*{\fill}
\begin{subfigure}[t]{0.24\textwidth}
\includegraphics[width=\linewidth]{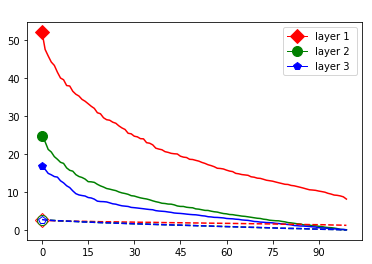}
\caption{The singular value curve of the weight matrices for of the MNIST dataset for comparison. The dashed lines are the singular value curves after the initialization, before the training.}
\end{subfigure}\hspace*{\fill}
\caption{The plots (a) and (b) depict a direct comparison of the numerical upper bounds of the data dependent ReLU singular values of the three hidden ReLU layers of a 3 layer MLP. The three ReLU layers are of the same width and the network solves a classification task. The dashed line represents the data-dependent ReLU singular values for misclassified data by the network and the solid line for correctly classified data. The plots (c) and (d) display the singular value curves of the weight matrices of the layers for comparison.}
\label{fig:in_correct_relu_singular_values}
\vspace{-5mm}
\end{figure}
The plots in Figure~\ref{fig:in_correct_relu_singular_values} display some properties which where fulfilled by the overwhelming majority of our test cases (see Appendix~\ref{appendix}): The bounds of the first layer are smaller than those of the second layer, and the bounds of the second layer are smaller than those of the third (last hidden) layer. Also the greatest bounds of a layer for the correctly classified data tend to be greater than the bounds for the incorrectly classified data. The growth of the singular values over the layers can easily be explained by the fact that the datasets over which the bounds are calculated were already propagated through the previous layers. The relation between the correctly and incorrectly classified datasets is the more interesting characteristic and motivates the next section.

\section{Gaussian Mean Width and ReLU layers}
\label{sec:gmw}
As we saw in the last section ReLU layers seem to handle data points that will be classified incorrectly differently from those that will be classified correctly. To study this difference we briefly want to consider which mechanisms are at play when one applies a layer of the form
\begin{equation}
\relu(A\cdot), \quad A\in\mathbb{R}^{m\times n},
\end{equation}
to a data point $x\in\mathbb{R}^n$. 
Using the SVD of $A(=U\Sigma V^T)$ and an on-$x$-dependent diagonal matrix $D_x$ with $0$s and $1$s on its diagonal to represent ReLU, we can express the application of this layer to $x$ as
\begin{equation}
\label{eq:relu_svs_interaction}
\relu(Ax) = D_xU\Sigma V^Tx.
\end{equation}

This expression shows that the norm of the output depends on how much $x$ is affected by each singular value of $A$, i.e., how much it is correlated with which right singular vector and how much ReLU can decrease the impact of the singular value for a given $x$ by setting entries to $0$. This can be studied, as Lemma~\ref{lemma:gmw} will show, using the Gaussian mean width~\cite{vershynin2015estimation}.

\subsection{Theory -- A Short Review and Expansion.}
We start by stating the definition of the Gaussian and spherical mean width.

\begin{definition}
The \textbf{Gaussian mean width of a set} $K\subset\mathbb{R}^n$~\cite{vershynin2015estimation} is defined as
\begin{equation}
\omega(K):=\mathbb{E}_{g\sim\mathcal{N}(0,\mathds{1}_n)} \sup_{x\in K-K} \left<g, x\right>,
\end{equation}
where $\mathcal{N}(0,\mathds{1}_n)$ denotes an n-dimensional standard Gaussian i.i.d.~vector and $K-K$ is to be read in the Minkowski sense, i.e., $K-K = \left\{x-y: x,y\in K\right\}$. 
We now introduce the new concept of the \textbf{Gaussian mean width of an operator} $\cA:\mathbb{R}^n\to\mathbb{R}^m$ as
\begin{equation}
\Omega(\cA):=\mathbb{E}_{g\sim\mathcal{N}(0,\mathds{1}_m)} \sup_{y\in \cA(\mathcal{B})-\cA(\mathcal{B})} \left<g, y\right>,
\end{equation}
where $\mathcal{B}\subset\mathbb{R}^n$ is the unit ball.
By sampling $g$ from the uniform distribution over the unit sphere instead of from the standard Gaussian, we analogously define $\overline\omega$, following~\cite{vershynin2015estimation} and similarly introduce $\overline\Omega$ as the \textbf{spherical mean width of a set} and \textbf{spherical width of an operator}, respectively.
\end{definition}

An interesting and useful property of the Gaussian mean width is its invariance under convexification, more precisely:
\begin{corollary}~\cite{vershynin2015estimation}
\label{cor:invConv}
For all $K\subset\mathbb{R}^n$ we have
\begin{equation}
\omega(\hull(K)) = \omega(K),
\end{equation}
where $\hull$ denotes the convex hull.
\end{corollary}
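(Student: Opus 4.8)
The plan is to reduce the statement to two elementary facts: that a linear functional attains the same supremum over a set as over its convex hull, and that the Minkowski difference commutes with convexification, i.e.\ $\hull(K) - \hull(K) = \hull(K-K)$. Since $\omega$ is an expectation over $g$ of a supremum of the \emph{linear} functional $\ip{g}{\cdot}$, once these two facts are in hand the result follows by fixing $g$, applying them, and taking expectations.

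First I would fix $g \in \bR^n$ and observe that for any set $S \subset \bR^n$ one has $\sup_{x \in S} \ip{g}{x} = \sup_{x \in \hull(S)} \ip{g}{x}$. This holds because every element of $\hull(S)$ is a convex combination $\sum_i \lambda_i x_i$ of points $x_i \in S$, so by linearity $\ip{g}{\sum_i \lambda_i x_i} = \sum_i \lambda_i \ip{g}{x_i} \le \sup_{x \in S}\ip{g}{x}$; the reverse inequality is immediate from $S \subseteq \hull(S)$.

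Second I would establish the set identity $\hull(K) - \hull(K) = \hull(K - K)$. The inclusion $\supseteq$ is clear, since $K - K \subseteq \hull(K) - \hull(K)$ and the right-hand side is convex. For $\subseteq$, I would take $u, v \in \hull(K)$, write $u = \sum_i \lambda_i a_i$ and $v = \sum_j \mu_j b_j$ with $a_i, b_j \in K$ and convex coefficients, and then use $\sum_i \lambda_i = \sum_j \mu_j = 1$ to rewrite $u - v = \sum_{i,j} \lambda_i \mu_j (a_i - b_j)$, which exhibits $u-v$ as a convex combination of elements of $K - K$. This bookkeeping step is the one requiring the most care, though it is not a genuine obstacle.

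Combining these gives $\omega(\hull(K)) = \mathbb{E}_g \sup_{y \in \hull(K)-\hull(K)} \ip{g}{y} = \mathbb{E}_g \sup_{y \in \hull(K-K)} \ip{g}{y} = \mathbb{E}_g \sup_{y \in K-K} \ip{g}{y} = \omega(K)$, where the middle equality invokes the set identity and the last invokes the first fact with $S = K - K$. The only mild caveat is that for unbounded $K$ the suprema may equal $+\infty$; but all displayed equalities hold as identities in $[0,\infty]$ regardless, so the conclusion is unaffected.
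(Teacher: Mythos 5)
Your proof is correct. Note that the paper itself offers no argument for this corollary at all---it is stated with a citation to Vershynin---so there is no in-paper proof to compare against; what you have written is a valid, self-contained justification of the cited fact. Both of your ingredients check out: the invariance of $\sup_{x\in S}\ip{g}{x}$ under convexification is the standard convex-combination computation, and your set identity $\hull(K)-\hull(K)=\hull(K-K)$ is proved correctly (for the inclusion $\supseteq$ you implicitly use that $\hull(K)-\hull(K)$ is convex, being a Minkowski sum of two convex sets; this deserves one explicit sentence). The more common route, and essentially the one in Vershynin's text as well as in this paper's own proof of Lemma~\ref{lemma:gmw}, avoids the set identity: for fixed $g$ the supremum of a linear functional over a Minkowski sum splits, so
\begin{equation*}
\sup_{y\in \hull(K)-\hull(K)}\ip{g}{y}=\sup_{u\in\hull(K)}\ip{g}{u}+\sup_{v\in\hull(K)}\ip{g}{-v},
\end{equation*}
and each summand is invariant under convexification by your first fact, giving the claim after recombining and taking expectations. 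That version is marginally shorter; your version isolates $\hull(K)-\hull(K)=\hull(K-K)$ as a reusable geometric fact, at the cost of the double-index bookkeeping. Your closing caveat about unbounded $K$ is also handled correctly: every displayed equality holds in $[0,\infty]$, since $0\in K-K$ for nonempty $K$.
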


Corollary~\ref{cor:invConv} shows that -- like in the definition of the (linear) operator norm -- it does not matter if one defines the
operator's Gaussian mean width via the unit ball or the unit sphere.

For a more rigorous treatment of the basics of the Gaussian mean width and why its square can be
interpreted as the ``effective dimension,'' we refer to~\cite{vershynin2015estimation} but want to state the following
remark based on~\cite{vershynin2015estimation}.
\begin{remark}
One can estimate a vector $x\in K\subset\mathbb{R}^n$ from $m$ random linear observations where $m$ is proportional to $\omega(K)^2$ with the proportionality factor solely depending on the acceptable (absolute) approximation error. Therefore the Gaussian mean width can be seen as a measure of complexity.
\end{remark}

We now analyze the properties of the Gaussian mean width beginning with Lemma~\ref{lemma:gmw_smw}, which explicitly relates the Gaussian and spherical mean width. The result follows from combining arguments from~\cite[Section 3.5.1]{vershynin2015estimation} with standard results concerning independent gamma random variables~\cite{gooch2010encyclopedic}, but we include the proof here for completeness.
\begin{lemma}
\label{lemma:gmw_smw}
Let $K$ be a set in $\mathbb{R}^n$, then
\[\omega(K) = c_n\overline\omega(K),\]
where
\begin{equation}\label{eqn:cn}
c_n=\sqrt{2}\frac{\Gamma(\frac{n+1}{2})}{\Gamma(\frac{n}{2})}
\end{equation}
and $\Gamma$ is the usual extension of the factorial function.
\end{lemma}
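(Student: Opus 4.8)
The plan is to exploit the rotational structure of the standard Gaussian by decomposing $g \sim \mathcal{N}(0,\mathds{1}_n)$ into its radial and angular parts. Writing $g = r\theta$ with $r = \norm{g}_2 \ge 0$ and $\theta = g/\norm{g}_2$, I would invoke the classical fact (the rotational invariance of the Gaussian, essentially the content of the arguments in~\cite[Section 3.5.1]{vershynin2015estimation}) that $r$ and $\theta$ are \emph{independent}, that $\theta$ is distributed uniformly on the unit sphere $\mathcal{S}$, and that $r$ follows a chi distribution with $n$ degrees of freedom. This decomposition is the whole engine of the proof.

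Next I would push the scalar $r$ through the supremum. Since $K-K$ is symmetric and contains $0$, the quantity $\sup_{x\in K-K}\langle\theta, x\rangle$ is nonnegative, and because $r \ge 0$ positive homogeneity of the inner product gives
\[
\sup_{x\in K-K}\langle g, x\rangle = \sup_{x\in K-K}\langle r\theta, x\rangle = r\,\sup_{x\in K-K}\langle\theta, x\rangle.
\]
Taking expectations and using the independence of $r$ and $\theta$ to split the expectation of the product into a product of expectations yields
\[
\omega(K) = \mathbb{E}[r]\cdot \mathbb{E}_\theta\sup_{x\in K-K}\langle\theta, x\rangle = \mathbb{E}[r]\cdot\overline\omega(K),
\]
so the entire statement reduces to identifying the constant $\mathbb{E}[r] = \mathbb{E}\norm{g}_2$ with $c_n$.

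Finally I would compute $\mathbb{E}\norm{g}_2$ directly. Since $\norm{g}_2^2 = \sum_{i=1}^n g_i^2$ is a sum of independent gamma (chi-squared) variables, it is gamma-distributed with shape $n/2$ and scale $2$; integrating its square root against that density reduces, via the elementary identity $\int_0^\infty y^{a-1}e^{-y/2}\,dy = 2^a\Gamma(a)$ with $a=\tfrac{n+1}{2}$, to
\[
\mathbb{E}\norm{g}_2 = \frac{2^{(n+1)/2}\,\Gamma\!\left(\tfrac{n+1}{2}\right)}{2^{n/2}\,\Gamma\!\left(\tfrac{n}{2}\right)} = \sqrt{2}\,\frac{\Gamma\!\left(\tfrac{n+1}{2}\right)}{\Gamma\!\left(\tfrac{n}{2}\right)} = c_n,
\]
which is exactly the claimed constant in~\eqref{eqn:cn}. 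This is the step that brings in the ``standard results concerning independent gamma random variables'' referenced in the statement.

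I expect no serious obstacle here; the two points that genuinely require care are the justification that the radial and angular parts of a Gaussian are truly independent (so that the expectation factors as a product rather than merely bounding one way) and the verification that the supremum is nonnegative, which is what makes pulling out $r \ge 0$ an equality instead of an inequality. Both are immediate once the symmetry of $K-K$ and the rotational invariance of $\mathcal{N}(0,\mathds{1}_n)$ are in hand, so the argument is essentially a clean bookkeeping of the Gaussian's polar decomposition followed by one gamma-function integral.
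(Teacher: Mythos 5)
Your proposal is correct and follows essentially the same route as the paper's proof: a polar decomposition of the Gaussian into independent radial (chi-distributed) and angular (uniform on $\mathcal{S}$) parts, pulling the nonnegative radial factor out of the supremum, factoring the expectation by independence, and identifying the constant as $\mathbb{E}\norm{g}_2 = c_n$. The only differences are presentational -- the paper verifies the radial/angular independence by explicitly factoring the joint density, whereas you cite it as a classical fact, and you carry out the gamma integral for $\mathbb{E}\norm{g}_2$ that the paper delegates to standard references.
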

\begin{proof}
We start by proving that we can decompose $g\sim\mathcal{N}(0,\mathds{1}_n)$ into 
\[g\equiv \alpha u,\] 
where $u\sim U(\mathcal{S})$, $\alpha\sim\chi(n)$
are respectively the uniform distribution over the unit sphere and the Chi (not $\chi^2$) distribution -- these are orthogonal projections
of $\mathcal{N}(0,\mathds{1}_n)$. Here $\equiv$ indicates the random variables arise from the same distribution.

Let $p_n(u|r) = \frac{\Gamma(\frac{n}{2})}{2\sqrt{\pi}^n} r^{1-n}$ be the probability density function (pdf) of the uniform distribution over the sphere $S(r)$ of radius $r$ centered at $0$ i.e.~$U(S_n(r))$. Also let $\chi_n(r) = \frac{2}{\sqrt{2}^n\Gamma(\frac{n}{2})} r^{n-1}e^{-r^2/2}$ be the pdf of the Chi distribution of degree $n$. Since $rs:(0,\infty)\times S\mapsto rs\in\mathbb{R}^n\setminus\{0\}$ is bijective and
\[p(u,r)=p_n(u|r)\chi_n(r) = \frac{1}{\sqrt{2\pi}^n} e^{-r^2/2}\]
we have $ru\sim\mathcal{N}(0,\mathds{1}_n).$ Due to the argument above we can write:
\begin{align*}
\omega(K) &= \mathbb{E}_{g\sim\mathcal{N}(0,\mathds{1}_n)} \sup_{x\in K-K} \left<g, x\right>\\
&= \mathbb{E}_{u\sim U(\mathcal{S}), \alpha\sim\chi(n)} \sup_{x\in K-K} \left<\alpha u, x\right>\\
&= \mathbb{E}_{u\sim U(\mathcal{S}), \alpha\sim\chi(n)} \alpha \sup_{x\in K-K} \left<u, x\right>\\
&=  \mathbb{E}_{\alpha\sim\chi(n)}\alpha \mathbb{E}_{u\sim U(\mathcal{S})} \sup_{x\in K-K} \left<u, x\right> \\
&=  \sqrt{2}\frac{\Gamma(\frac{n+1}{2})}{\Gamma(\frac{n}{2})} \mathbb{E}_{u\sim U(\mathcal{S})} \sup_{x\in K-K} \left<u, x\right> \\
&=  \sqrt{2}\frac{\Gamma(\frac{n+1}{2})}{\Gamma(\frac{n}{2})} \overline\omega(K)
\end{align*}
\end{proof}

We will now utilize this explicit connection between the Gaussian and spherical mean width in the following lemma.
\begin{lemma}
\label{lemma:gmw}
Let $\mathcal{A}\in H_+(\mathbb{R}^n, \mathbb{R}^m)$ be a general nonnegatively homogeneous operator.  Then we have the following relations:
\begin{equation}
\label{eq:basic_relation_op_and_gmw}
\Omega(\mathcal{A}) = 2c_m\mathbb{E}_{u\sim U(\mathcal{S})}\sup_{x\in\mathcal{B}}\left<u, \mathcal{A}x\right>,
\end{equation}
\begin{equation}\label{eq:basic_relation_op_and_gmw2}
\Omega(\mathcal{A}) \le 2c_m ||\mathcal{A}||_\ast,
\end{equation}
If $A \in \bR^{m \times n}$ with SVD $A=U\Sigma V^T$, we also have
\[
\Omega(A) = 2c_m \mathbb{E}_{u\sim U(\mathcal{S})} ||A^\top u||_2 = 2c_m \mathbb{E}_{u\sim U(\mathcal{S})} ||\Sigma^\top u||_2.
\]
\end{lemma}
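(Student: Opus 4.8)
The plan is to derive all three relations from Lemma~\ref{lemma:gmw_smw} together with the observation that $\Omega(\cA)$ is nothing but the set-valued Gaussian mean width $\omega(\cA(\cB))$ of the image $\cA(\cB)\subset\bR^m$. Indeed, comparing the two definitions, the operator width $\Omega(\cA)=\mathbb{E}_{g\sim\mathcal{N}(0,\mathds{1}_m)}\sup_{y\in\cA(\cB)-\cA(\cB)}\ip{g}{y}$ is exactly $\omega(K)$ for $K=\cA(\cB)$ taken in $\bR^m$. Applying Lemma~\ref{lemma:gmw_smw} in dimension $m$ therefore produces the constant at once: $\Omega(\cA)=c_m\,\overline\omega(\cA(\cB))=c_m\,\mathbb{E}_{u\sim U(\cS)}\sup_{y\in\cA(\cB)-\cA(\cB)}\ip{u}{y}$.

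To obtain the factor $2$ and the form in \eqref{eq:basic_relation_op_and_gmw}, I would split the support function of the Minkowski difference as $\sup_{y\in\cA(\cB)-\cA(\cB)}\ip{u}{y}=\sup_{x\in\cB}\ip{u}{\cA x}+\sup_{x\in\cB}\ip{-u}{\cA x}$. Because $\cA$ is only nonnegatively homogeneous, $\cA(\cB)$ need not be symmetric and the two suprema cannot be identified pointwise; instead I would pass to the expectation and use the invariance of $U(\cS)$ under $u\mapsto-u$, which gives $\mathbb{E}_u\sup_{x\in\cB}\ip{-u}{\cA x}=\mathbb{E}_u\sup_{x\in\cB}\ip{u}{\cA x}$. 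The two contributions coincide in expectation, yielding $\Omega(\cA)=2c_m\,\mathbb{E}_{u\sim U(\cS)}\sup_{x\in\cB}\ip{u}{\cA x}$.

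The inequality \eqref{eq:basic_relation_op_and_gmw2} then drops out of the first relation by Cauchy--Schwarz: since $\norm{u}_2=1$ on $\cS$, we have $\ip{u}{\cA x}\le\norm{\cA x}_2$, hence $\sup_{x\in\cB}\ip{u}{\cA x}\le\norm{\cA}_\ast$, and taking expectations leaves the constant $\norm{\cA}_\ast$ untouched. For the linear case I would specialize the first relation to $\cA=A$ and move to the adjoint, $\sup_{x\in\cB}\ip{u}{Ax}=\sup_{x\in\cB}\ip{A^\top u}{x}=\norm{A^\top u}_2$, the last step being the standard identity that the support function of the unit ball is the Euclidean norm; this gives $\Omega(A)=2c_m\,\mathbb{E}_{u\sim U(\cS)}\norm{A^\top u}_2$. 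Inserting the SVD gives $A^\top=V\Sigma^\top U^\top$ and $\norm{A^\top u}_2=\norm{\Sigma^\top U^\top u}_2$ since $V$ is orthogonal, and a change of variables $w=U^\top u$, justified by the rotational invariance of $U(\cS)$, replaces $U^\top u$ by a fresh uniform sample, yielding the final expression with $\Sigma^\top$.

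I expect the symmetrization step to be the only genuine obstacle. The temptation is to write $\cA(\cB)=-\cA(\cB)$, which is false for a general nonnegatively homogeneous map such as $\relu(A\cdot)$; the correct move is to transfer the sign onto the symmetric measure $U(\cS)$ rather than onto the set. Everything else reduces to Cauchy--Schwarz, the adjoint identity, and orthogonal invariance of the uniform measure on the sphere.
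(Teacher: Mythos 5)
Your proposal is correct and follows essentially the same route as the paper: reduction to the spherical width via Lemma~\ref{lemma:gmw_smw}, splitting the support function of $\cA(\cB)-\cA(\cB)$, Cauchy--Schwarz for the operator-norm bound, the adjoint identity $\sup_{x\in\cB}\ip{u}{Ax}=\norm{A^\top u}_2$, and rotational invariance of $U(\mathcal{S})$ to pass from $A^\top$ to $\Sigma^\top$. In fact, you are slightly more careful than the paper at the one delicate point — justifying $\mathbb{E}_u\sup_{x\in\cB}\ip{-u}{\cA x}=\mathbb{E}_u\sup_{x\in\cB}\ip{u}{\cA x}$ by symmetry of the measure rather than (falsely) by symmetry of the image $\cA(\cB)$ — a step the paper's proof leaves implicit.
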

\begin{proof} We calculate
\begin{align*}
\Omega(\mathcal{A}) &= c_m\overline\Omega(\mathcal{A}) 
= c_m \mathbb{E}_{u\sim U(\mathcal{S})} \sup_{y\in\mathcal{A}\mathcal{B}-\mathcal{A}\mathcal{B}} \left<u,y\right>\\
&= c_m \mathbb{E}_{u\sim U(\mathcal{S})} \sup_{x,\tilde x\in\mathcal{B}} \left<u,\mathcal{A}x-\mathcal{A}\tilde x\right>\\
&= c_m \left[\mathbb{E}_{u\sim U(\mathcal{S})} \sup_{x\in\mathcal{B}} \left<u,\mathcal{A}x\right> + \mathbb{E}_{u\sim U(\mathcal{S})} \sup_{x\in\mathcal{B}} \left<u,-\mathcal{A}x\right> \right] \\
&= 2c_m \mathbb{E}_{u\sim U(\mathcal{S})} \sup_{x\in\mathcal{B}} \left<u,\mathcal{A}x\right>.
\end{align*}
This proves Equation~\ref{eq:basic_relation_op_and_gmw}. Then Equation~\ref{eq:basic_relation_op_and_gmw2} follows from Cauchy-Schwarz.

We now let $A \in \bR^{m \times n}$ and claim that for all $u \in U(\cS)$
\[
\sup_{x \in \cB} \ip{u}{Ax} = \norm{A^\top u}_2.
\]
For all $x \in \cB$, we may apply Cauchy-Schwarz to conclude
\[
\ip{u}{Ax} =\ip{A^\top u}{x}\leq \norm{A^\top u}_2 \norm{x}_2 \leq \norm{A^\top u}_2.
\]
Further, presuming without loss of generality that $A^\top u \neq 0$, $\frac{A^\top u}{\norm{A^\top u}_2} \in \cB$ and
\[
\ip{A^\top u}{\frac{A^\top u}{\norm{A^\top u}_2}} = \norm{A^\top u}_2,
\]
as desired.  Hence
\begin{align*}
\Omega(A) &= 2c_m \mathbb{E}_{u\sim U(\mathcal{S})} \sup_{x\in\mathcal{B}} \ip{u}{Ax} \\
&= 2c_m \mathbb{E}_{u\sim U(\mathcal{S})}\norm{A^\top u}_2\\
&= 2c_m \mathbb{E}_{u\sim U(\mathcal{S})}\norm{\Sigma^\top u}_2.
%
\end{align*}
\end{proof}

This lemma demonstrates that the Gaussian mean width of an operator is, at least in the linear case, in some sense a way to measure the accumulative effect of the singular values of an operator and that the Gaussian mean width can also be upper bounded via our more general definition of the operator norm. This makes the Gaussian mean width a good tool for further numerical explorations of the effects seen in Figure~\ref{fig:in_correct_relu_singular_values} and discussed at the beginning of this section.

\subsection{Numerics -- A Big Difference Between Correctly and Incorrectly Classified Data}
Before we can utilize the Gaussian mean width in numerical testing, we derive an algorithm for calculating it.
More specifically we want to calculate a good approximation of 
\[\omega(K):=\mathbb{E}_{g\sim\mathcal{N}(0,\mathds{1}_n)} \sup_{x\in K-K} \left<g, x\right>,\]
where $K\subset\mathbb{R}^n$ is finite. As argued in~\cite{vershynin2015estimation}, due to the Gaussian concentration of measure,
\begin{equation}
\label{eq:gmw_estimate}
\sup_{x\in K-K} \left<g, x\right>
\end{equation}
for one $g\sim\mathcal{N}(0,\mathds{1}_n)$ already yields a good estimate for $\omega(K)$. In practice, to make this estimate more stable, we averaged over the results of 100 samples of $g$. Due to Corollary~\ref{cor:invConv} we can replace Formula~\ref{eq:gmw_estimate} by 
\begin{equation}
\label{eq:gmw_estimate_conv}
\sup_{x\in \hull{K}-\hull{K}} \left<g, x\right>.
\end{equation}
As we will see now, this can be solved via a linear program. In what follows, we use the notation $:$ to represent the horizontal concatenation of the vectors/matrices and \rotatebox[origin=c]{90}{$:$} for vertical concatenation. Additionally we will overload the notation for the set $K=\{x_1,\dots,x_{|K|}\}$ to also represent a matrix $K\in\mathbb{R}^{|K|\times n}$ where each row is given by a different element of the set $K$.
This allows us to formulate the solution of Formula~\ref{eq:gmw_estimate_conv} via the constraint optimization problem
\begin{align}
\max_x\left<g, x\right>, \enskip \textrm{s.t.} \enskip  &\begin{pmatrix} 
1 & \cdots & 1 & 0 & \cdots & 0 \\
0 & \cdots & 0 & 1 & \cdots & 1
\end{pmatrix} 
\begin{pmatrix} 
\alpha  \\
\rotatebox[origin=c]{90}{$:$}\\
\beta 
\end{pmatrix}
=
\begin{pmatrix} 
1  \\ 1 
\end{pmatrix},
\label{eq:prob_constraint}\\
&(\alpha:\beta)\ge0 \enskip \textrm{and}\label{eq:positivety_constraint}\\
&(K^T:-K^T)(\alpha:\beta)^T=x,
\end{align}
where $\alpha, \beta \in \mathbb{R}^{|K|}$.
Rewriting yields the algorithm in form of the following linear program:
\begin{equation}
-\min_{(\alpha:\beta)}
\left<
\begin{pmatrix} 
-K\\
\rotatebox[origin=c]{90}{$:$}\\
K 
\end{pmatrix}
g,
\begin{pmatrix} 
\alpha  \\
\rotatebox[origin=c]{90}{$:$}\\
\beta 
\end{pmatrix}
\right> \enskip \textrm{s.t.} \enskip \eqref{eq:prob_constraint}, \eqref{eq:positivety_constraint} \enskip \textrm{hold}.
\end{equation}
For the problems considered in this paper this algorithm allows us to calculate the Gaussian mean width in well under a second using the SciPy~\cite{scipy} method for linear programming.
We will now utilize this algorithm to approximate the Gaussian mean width of a given dataset propagating through the network after each layer; this is equivalent to calculating the Gaussian mean width for the (admittedly biased and therefore not nonnegatively homogeneous) operator given by each layer and again, like in the ReLU singular value case, an appropriate dataset not equal to $\mathcal{B}$.
\begin{figure}[t!]
\begin{subfigure}[t]{0.24\textwidth}
\includegraphics[width=\linewidth]{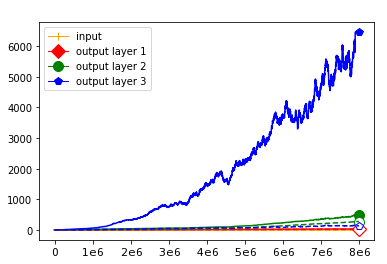}
\caption{The Gaussian mean width over the course of the training of the same network used in Figures~\ref{fig:in_correct_relu_singular_values} (a) and (c) (trained on the dataset HTRU2).}
\end{subfigure}\hspace*{\fill}
\begin{subfigure}[t]{0.24\textwidth}
\includegraphics[width=\linewidth]{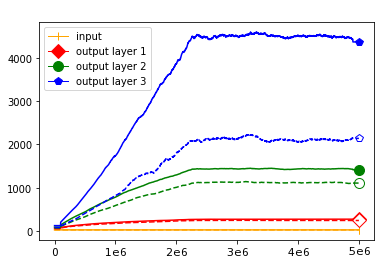}
\caption{The Gaussian mean width over the course of the training of the same network used in Figures~\ref{fig:in_correct_relu_singular_values} (b) and (d) (trained on the dataset MNIST).}
\end{subfigure}\hspace*{\fill}
\caption{The graphs display how the Gaussian mean width (denoted on the vertical axis) changes on the one hand during the propagation through the layers (different colors and shapes) and on the other hand over the course of the training (horizontal axis) of the networks. For comparability we used the same networks as described in Figure~\ref{fig:in_correct_relu_singular_values}. The solid line represents the calculations based on randomly chosen sets of correctly classified data and the dashed line the calculations based on randomly chosen sets of incorrectly classified data of equal size. All graphs where smoothed out for readability. We also checked that the Gaussian mean widths of the input sets (yellow) where on average essentially the same to ensure that we capture differences in processing rather than in data.}
\label{fig:gmw_over_time}
\vspace{-4mm}
\end{figure}
We use the same networks as in Figure~\ref{fig:in_correct_relu_singular_values} and also look at correctly and incorrectly classified data separately. To get a better understanding how these effects come about we monitored how the Gaussian mean width changed over the course of the training. The results are displayed in Figure~\ref{fig:gmw_over_time}; for more see Appendix~\ref{appendix}. There are two very clear effects which both seem to correspond to traits of the ReLU singular values seen in Figure~\ref{fig:in_correct_relu_singular_values}. First, like in the ReLU singular value curves, the Gaussian mean width of each layer seems to get bigger the deeper we are in the network. Second, like the biggest bounds of the ReLU singular values, the Gaussian mean widths of the correctly classified data are bigger than the Gaussian mean widths of the incorrectly classified data, although the effect seems to be clearer for the Gaussian mean width than for the ReLU singular values. This might be explained by the fact that the Gaussian mean width graphs are more accurate due to being smoothed and having a more stable calculation method.  We would also like to point out that in most of our experiments the graph did not ``converge,'' unlike Subfigure~\ref{fig:gmw_over_time}(b), but behaved more like the graph in Subfigure~\ref{fig:gmw_over_time}(a) (even for very small networks after several hours of training).

\section{Applications}
\label{sec:app}
In this section we will first state a hypothesis about the inner functioning of neural networks (or at least MLPs) based on the results of Sections~\ref{sec:gmw} and~\ref{sec:relu_svs}, more specifically the Figures~\ref{fig:in_correct_relu_singular_values} and~\ref{fig:gmw_over_time}. We will then build practical tools based on this hypothesis and evaluate them numerically.
\subsection{A Hypotheses and Summary Based on the Existing Results} 
We start by considering Figure~\ref{fig:in_correct_relu_singular_values}. The figure shows that the singular value curves of the weight matrices of ReLU layers tend to have some singular values dominating the others. Subfigure~\ref{fig:in_correct_relu_singular_values}(c) shows a (in our experiments) typical case, while Subfigure~\ref{fig:in_correct_relu_singular_values}(d) is more of an extreme case in that does not have fast decay. Interestingly this drop off of the singular value curves can be seen in both networks, even more extremely in the ReLU singular value curves, see Subfigures~\ref{fig:in_correct_relu_singular_values}(a) and~\ref{fig:in_correct_relu_singular_values}(b).  The ReLU singular values are in a sense more representative of the networks since they also reflect the effects of the ReLU activation function and the bias. What this means is that low-rank approximations of a layer, low-rank in the sense that weight matrix has low rank, are already very good approximations of the layer, indicating that the layers are ``essentially'' low-rank. This in turn means that only a few of the correlations of a data point with the singular vectors of a layer's weight matrix matter greatly with regard to the outcome of that layer. If we now consider Figure~\ref{fig:gmw_over_time}, or more generally Section~\ref{sec:gmw}, we see that the Gaussian mean widths within the network of sets of correctly classified data points is much higher than the widths of sets of incorrectly classified data points. \textbf{This suggests that misclassifications are, at least partly, caused by a lack of correlations (of the data point) with singular vectors corresponding to big singular values, that are also not ``blocked'' by ReLU.} 
\subsection{Double-Layers}
In this subsection we present a simple tool to deploy the observational hypothesis discussed above in practice: double-layers. We define a (ReLU) \textbf{double-layer} as a layer of the form
\begin{equation}
\relu(WM\cdot+b),
\end{equation}
where $W\in\mathbb{R}^{m\times k}$, $M\in\mathbb{R}^{k\times n}$ and $k<\min(m,n)$. This structure enables one to enforce the layer's weight matrix to be low-rank, specifically to be of rank $\le k$.
Some properties of these layers are:
\begin{itemize}
\item One can create and control the size of a \textbf{linear bottleneck} between $W$ and $M.$
\item As soon as $k<\frac{mn}{m+n}$, which in the case $m=n$ reduces to $k<\frac{1}{2}n$, a double-layer has less parameters then a single-layer.
\item At least in some tasks, as we will see later, they seem to perform better than normal ReLU layer.
\item Since one can use their rank to adjust their expressibility, one can often permit $m=n$ in practice, which allows for a more effective usage of the variance argument for initialization as proposed by Glorot~\cite{glorot2010understanding} and He~\cite{he2015delving}. (Their argument struggles with the case $m\ne n$.)
\end{itemize}
Before we can use double-layers in practice we have to think about how we can initialize them to be in line with the widely used arguments of Glorot~\cite{glorot2010understanding} and He~\cite{he2015delving}. The argument results in the suggestion to use ReLU layers whose weight matrix $A\in\mathbb{R}^{m\times n}$ has entries that are randomly sampled with mean $0$ and a variance of $\frac{4}{m+n}.$ This is fulfilled for the following class of initializations that can be used for double-layers.
\begin{definition}
For $p\in\mathbb{Z}_+$ we define the \textbf{double-p-product} initialization of a ReLU (double-)layer of the form \[\relu(WM\cdot+b),\] where $W\in\mathbb{R}^{m\times k}$ and $M\in\mathbb{R}^{k\times n}$ (for normal layers this initialization can be used by setting $A=WM$) as follows: $b=0$ and all entries  $w_{i,j}$ and $m_{i,j}$ are products of $p$ i.\,i.\,d.~samples from $\mathcal{N}(0,\sqrt[\leftroot{-3}\uproot{3}2p]{\tfrac{4/k}{n+n}}).$
\end{definition}
Note that the singular value curve of $WM$ after initialization does, unlike for the usual Glorot initialization, not follow the Marchenko-Pastur distribution~\cite{marchenko1967distribution}, since the entries are not independent.
\subsection{Harmonic Pruning and Double-Layers in Practice}
We will now present a principled way to explore the use of linear bottlenecks using double-layers: A pruning algorithm that, inspired by our results, decreases the ranks of the weight matrices of a network and thereby also, at least for double-layers, the number of parameters. We call it \textbf{harmonic pruning}. It successively decreases the ranks of the weight matrices of an already trained MLP. The algorithm has the following steps:
\begin{enumerate}
\item For each layer calculate the change in accuracy
	of the whole network by setting the smallest non-zero singular value to zero.
Denote by $A$ the weight matrix that belongs to the layer that decreases the accuracy the least.
\item Calculate the SVD $A=U\Sigma V^T$ and let $\tilde\Sigma$ denote $\Sigma$ after the
	smallest non-zero singular value was set to zero.
	Also define $k:=\arg\min_i\left\{\tilde\Sigma_{i,i}:\tilde\Sigma_{i,i}\ne0\right\}$ as the index of the
	smallest non-zero singular value.
\item Reimplement the layer with a split weight matrix replacing $A$ by the rank constrained product $WM$.
	Here $W$ is given by the first $k$ columns of $U\sqrt{\tilde\Sigma}$ and $M$ is given by the first $k$ rows of $\sqrt{\tilde\Sigma}V^T$. $W$ and $M$ are implemented separately to enforce the upper rank (of $k$) during further steps. Overall we are cutting the rank down by one by setting the smallest non-zero singular value permanently to zero.
\item Retrain the network for some batches if some retraining criterion is if fulfilled, e.\,g.\ every $10$th iteration or if one of the layers has become very low rank.
\item If some stopping criterion is reached (e.\,g., the loss increases too much) stop, otherwise goto Step 2).
\end{enumerate}

We applied this algorithm with the retraining criterion to retrain whenever
\begin{itemize}
\item the accuracy drops by more than 0.5\% with regard to the initially trained network,
\item we are in a $10$th iteration, or
\item one of the layers already has rank less then $10$.
\end{itemize}
\begin{figure}[t]
\begin{subfigure}[t]{0.58\textwidth}
\includegraphics[width=0.8\linewidth]{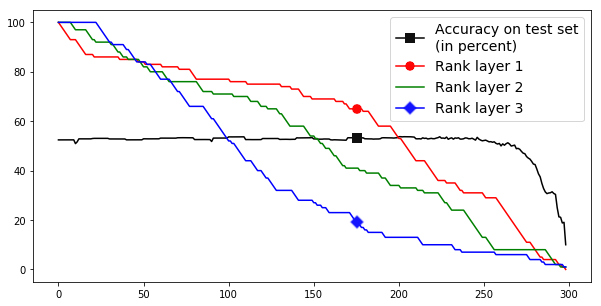}
\caption{CIFAR-10}
\end{subfigure}
\begin{subfigure}[t]{0.58\textwidth}
\includegraphics[width=0.8\linewidth]{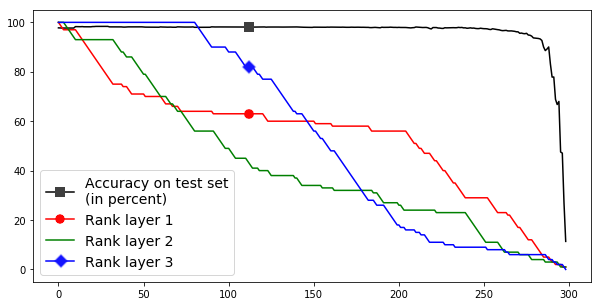}
\caption{MNIST}
\end{subfigure}
\caption{The plot displays the decay of the ranks over the pruning process and the changes in accuracy of the network on the training set. The horizontal axis represents the number of pruning steps, i.\,e., the iteration of the algorithm.}
\label{fig:overview_pruning}
\vspace{-02mm}
\end{figure}
For demonstrative purposes we set up an experiment that did not use any stopping criterion and pruned until the network had no parameters left. Every retraining used 50 batches (of size 1024). We pruned a network with three hidden ReLU double-layers of size $100$ and a softmax classification layer trained via cross-entropy-with-logits loss function and the Adam optimization algorithm. We used a batch size of 1024 and trained until the accuracy settled. The results of this procedure being run on two different classification tasks, on the CIFAR-10 and MNIST datasets, may be seen in Figure~\ref{fig:overview_pruning}. We trained for the classification task given by the CIFAR-10~\cite{krizhevsky2009learning} dataset and reached an accuracy of ca. 52\% over the test set -- in our tests MLPs of similar size and with normal ReLU layers and Glorot initializations reached similar or lower accuracies. We chose CIFAR-10 since we wanted to test a task that MLP is not overwhelmingly successful in achieving in order to reduce possible redundancies in the network. The resulting behavior of the network's ranks and accuracy over the test set during the pruning are displayed in Figure~\ref{fig:overview_pruning}(a). Figure~\ref{fig:overview_pruning}(b) shows the same setup, but based on MNIST, which yields a much higher success rate. The graph presents an impressively constant accuracy over the pruning process. We only see a drop in accuracy at the very end when the networks start to fail completely and the accuracy drops to the ``guessing baseline'' of 10\%. 
The network only drops permanently below 50\% mark after the iteration 268. Before that drop under 50\% the layer's weight matrices reached ranks of 18, 8, 6 respectively for the three ReLU layers of the network. This is for multiple reasons very interesting. One of the reasons is that the double-layer network with layer widths of 100, 100, 100 and ranks of 18, 8, 6 has only 61,206 parameters. A normal single-layer network with these layer widths has 328,510, i.e., more than 5 times as many parameters. When we trained a normal ReLU networks (i.e., no double-layers) with a comparable number of parameters (layer widths 20, 9, 7), we only reached about 40\% accuracy. But when we trained double-layer networks with layer widths of 100, 100, 100 and ranks 18, 8, 6, we again reached roughly 50\% accuracy. Unlike most other pruning methods this one does not decrease the number of active connections or neurons; rather, it synchronizes the neurons within a given layer using their common weight matrix to denoise all of them by removing the harmonics associated with less significant singular values. This is in some sense more related to pooling in CNNs than pruning, since pooling also denoises via dimensionality reduction by approximately removing the harmonics associated with high frequencies in the Fourier basis~\cite{rippel2015spectral}. The main differences here are that our method does not impose the dimensionality decrease via the Fourier basis but rather in layer-specific eigenbases and that we only decrease the dimensionality and not the number of output parameters. However, one could trivially decrease the number of output parameters by decreasing the layer width.

\section{Conclusion and Future Work}
\label{sec:conclusion}
In this work we explored the behavior and role of the singular values of ReLU layers and how they interact with the ReLU activation function. To do this we defined and explored ReLU singular values and the Gaussian mean width of operators. Our results do not only explain the success of linear bottlenecks, but also provide a principled way to use them in the form of double-layers. We see a multitude of open questions and future work directions.
For example, are there connections to the usual bottlenecks in ResNet blocks?
Can the measures presented in this paper be utilized
to determine fruitful matches of models and data in transfer learning? Also, can one detect overfitting using these methods? 

We think that both tools are theoretically interesting and further our understanding of the inner workings of neural networks. In practice ReLU singular values seem to be useful in the design of new architectures incorporating bottlenecks and the Gaussian mean width of operators promises to be a useful tool to analyze how well a given network is applicable to a given data set.

\section*{Acknowledgment}
The authors would like to thank Jens Behrmann, Jonathan von	Schroeder and Christian Etmann for their fruitful discussions and helpful comments on the paper.

S.\ Dittmer is supported by the Deutsche Forschungsgemeinschaft (DFG) within the framework of GRK 2224 / $\pi^3$: Parameter Identification - Analysis, Algorithms, Applications.

\bibliography{main} 

\begin{thebibliography}{10}

\bibitem{jolliffe2011principal}
I.~Jolliffe, ``Principal component analysis,'' in {\em International
  Encyclopedia of Statistical Science}, pp.~1094--1096, Springer, 2011.

\bibitem{golub1987generalization}
G.~H. Golub, A.~Hoffman, and G.~W. Stewart, ``A generalization of the
  {E}ckart-{Y}oung-{M}irsky matrix approximation theorem,'' {\em Linear Algebra
  Appl.}, vol.~88/89, pp.~317--327, 1987.

\bibitem{penrose1955generalized}
R.~Penrose, ``A generalized inverse for matrices,'' {\em Proc. Cambridge
  Philos. Soc.}, vol.~51, pp.~406--413, 1955.

\bibitem{comon1990tracking}
P.~Comon and G.~H. Golub, ``Tracking a few extreme singular values and vectors
  in signal processing,'' {\em Proceedings of the IEEE}, vol.~78, no.~8,
  pp.~1327--1343, 1990.

\bibitem{louis1990mollifier}
A.~K. Louis and P.~Maass, ``A mollifier method for linear operator equations of
  the first kind,'' {\em Inverse Probl.}, vol.~6, no.~3, p.~427, 1990.

\bibitem{nair2010rectified}
V.~Nair and G.~E. Hinton, ``Rectified linear units improve restricted
  {B}oltzmann machines,'' in {\em Proceedings of the 27th International
  Conference on Machine Learning (ICML-10)}, pp.~807--814, 2010.

\bibitem{vershynin2015estimation}
R.~Vershynin, ``Estimation in high dimensions: a geometric perspective,'' in
  {\em Sampling Theory, a Renaissance}, pp.~3--66, Springer, 2015.

\bibitem{code_gaussian_mean_width}
S.~Dittmer, ``On calculating the gaussian mean width of finite sets.''
  \url{https://github.com/sdittmer/On_Calculating_the_Gaussian_Mean_Width_of_Finite_Sets},
  2018.

\bibitem{code_relu_singular_values}
S.~Dittmer, ``On calculating upper bounds of relu singular values.''
  \url{https://github.com/sdittmer/On_Calculating_Upper_Bounds_of_ReLU_Singular_Values},
  2018.

\bibitem{fujimoto2004singular}
K.~Fujimoto, ``What are singular values of nonlinear operators?,'' in {\em
  Decision and Control, 2004. CDC. 43rd IEEE Conference on}, vol.~2,
  pp.~1623--1628, IEEE, 2004.

\bibitem{MobileNetV2}
M.~Sandler, A.~Howard, M.~Zhu, A.~Zhmoginov, and L.-C. Chen, ``Mobilenetv2:
  Inverted residuals and linear bottlenecks,'' in {\em Proceedings of the IEEE
  Conference on Computer Vision and Pattern Recognition}, pp.~4510--4520, 2018.

\bibitem{ba2014deep}
J.~Ba and R.~Caruana, ``Do deep nets really need to be deep?,'' in {\em
  Advances in neural information processing systems}, pp.~2654--2662, 2014.

\bibitem{sainath2013low}
T.~N. Sainath, B.~Kingsbury, V.~Sindhwani, E.~Arisoy, and B.~Ramabhadran,
  ``Low-rank matrix factorization for deep neural network training with
  high-dimensional output targets,'' in {\em Acoustics, Speech and Signal
  Processing (ICASSP), 2013 IEEE International Conference on}, pp.~6655--6659,
  IEEE, 2013.

\bibitem{zeiler2013stochastic}
M.~D. Zeiler and R.~Fergus, ``Stochastic pooling for regularization of deep
  convolutional neural networks,'' {\em arXiv preprint arXiv:1301.3557}, 2013.

\bibitem{he2016deep}
K.~He, X.~Zhang, S.~Ren, and J.~Sun, ``Deep residual learning for image
  recognition,'' in {\em Proceedings of the IEEE conference on computer vision
  and pattern recognition}, pp.~770--778, 2016.

\bibitem{li2018measuring}
C.~Li, H.~Farkhoor, R.~Liu, and J.~Yosinski, ``Measuring the intrinsic
  dimension of objective landscapes,'' {\em arXiv preprint arXiv:1804.08838},
  2018.

\bibitem{rippel2015spectral}
O.~Rippel, J.~Snoek, and R.~P. Adams, ``Spectral representations for
  convolutional neural networks,'' in {\em Advances in Neural Information
  Processing Systems}, pp.~2449--2457, 2015.

\bibitem{GiSaBr16}
R.~Giryes, G.~Sapiro, and A.~M. Bronstein, ``Deep neural networks with random
  {G}aussian weights: a universal classification strategy?,'' {\em IEEE Trans.
  Signal Process.}, vol.~64, no.~13, pp.~3444--3457, 2016.

\bibitem{sedghi2018the}
H.~Sedghi, V.~Gupta, and P.~M. Long, ``The singular values of convolutional
  layers,'' in {\em International Conference on Learning Representations},
  2019.

\bibitem{gohberg1988introduction}
I.~Gohberg and M.~G. Krein, {\em Introduction to the theory of linear
  nonselfadjoint operators}, vol.~18.
\newblock American Mathematical Soc., 1988.

\bibitem{he2015delving}
K.~He, X.~Zhang, S.~Ren, and J.~Sun, ``Delving deep into rectifiers: Surpassing
  human-level performance on imagenet classification,'' in {\em Proceedings of
  the IEEE International Conference on Computer Vision}, pp.~1026--1034, 2015.

\bibitem{kingma2014adam}
D.~P. Kingma and J.~Ba, ``Adam: A method for stochastic optimization,'' {\em
  arXiv preprint arXiv:1412.6980}, 2014.

\bibitem{tensorflow2015-whitepaper}
``{TensorFlow}: Large-scale machine learning on heterogeneous systems.''

\bibitem{glorot2010understanding}
X.~Glorot and Y.~Bengio, ``Understanding the difficulty of training deep
  feedforward neural networks,'' in {\em Proceedings of the Thirteenth
  International Conference on Artificial Intelligence and Statistics},
  pp.~249--256, 2010.

\bibitem{lyon2016fifty}
R.~J. Lyon, B.~Stappers, S.~Cooper, J.~Brooke, and J.~Knowles, ``Fifty years of
  pulsar candidate selection: from simple filters to a new principled real-time
  classification approach,'' {\em Mon. Notices Royal Astron. Soc.}, vol.~459,
  no.~1, pp.~1104--1123, 2016.

\bibitem{lecun1998mnist}
Y.~LeCun, ``The {MNIST} database of handwritten digits,'' {\em http://yann.
  lecun. com/exdb/mnist/}, 1998.

\bibitem{gooch2010encyclopedic}
J.~Gooch, {\em Encyclopedic Dictionary of Polymers}.
\newblock No.~v. 1 in Encyclopedic Dictionary of Polymers, Springer, 2010.

\bibitem{scipy}
E.~Jones, T.~Oliphant, P.~Peterson, {\em et~al.}, ``{SciPy}: Open source
  scientific tools for {Python},'' 2001--.
\newblock [Online; accessed <today>].

\bibitem{marchenko1967distribution}
V.~A. Marchenko and L.~A. Pastur, ``Distribution of eigenvalues for some sets
  of random matrices,'' {\em Mat. Sb.}, vol.~114, no.~4, pp.~507--536, 1967.

\bibitem{krizhevsky2009learning}
A.~Krizhevsky and G.~Hinton, ``Learning multiple layers of features from tiny
  images,'' tech. rep., Citeseer, 2009.

\bibitem{harrison1978hedonic}
D.~Harrison~Jr and D.~L. Rubinfeld, ``Hedonic housing prices and the demand for
  clean air,'' {\em J Env Econ \& Management}, vol.~5, no.~1, pp.~81--102,
  1978.

\end{thebibliography}
\bibliographystyle{ieeetr}
\vspace{-9mm}
\begin{IEEEbiographynophoto}{S\"oren Dittmer}
is a Ph.D.\ student and member of the Research Training Group $\pi^3$  at the Center for Industrial Mathematics (ZeTeM) at the University of Bremen, Germany. His current research interests include deep learning, inverse problems, harmonic analysis and signal processing.
\end{IEEEbiographynophoto}
\begin{IEEEbiographynophoto}{Emily J. King} has been an Assistant Professor
(Juniorprofessor) leading the research group Computational Data Analysis at the University of Bremen, Germany since 2014.  She earned a B.S. in Applied Mathematics and an M.S. in Mathematics from Texas A\&M
University, College Station, Texas, U.S.A., in 2003
and 2005, respectively. She then received a Ph.D.
in Mathematics from the University of Maryland,
College Park, Maryland, U.S.A., in 2009. From
2009 to 2011, she was an IRTA Postdoctoral Fellow
in the Laboratory for Integrative and Medical Biophysics at the National
Institutes of Health in Bethesda, Maryland, U.S.A. As an Alexander von Humboldt Postdoctoral Fellow, she worked from 2011 to 2013 at the University of Osnabr\"uck, the University of Bonn, and the Technical University of Berlin, all in Germany. Her research interests include algebraic and applied harmonic analysis,
signal and image processing, data analysis, and frame theory.
\end{IEEEbiographynophoto}
\begin{IEEEbiographynophoto}{Peter Maass}
is a Professor for Applied Mathematics and the Director of the Center for Industrial
Mathematics (ZeTeM) at University of Bremen, Germany, since 2009. He held positions as Assistant
Professor at Tufts University, Medford, MA, USA and Saarland University, Saarbr\"ucken, Germany,
before he was appointed as a Full Professor of Numerical Analysis at University of Potsdam,
Germany, in 1993. Peter Maass is an Adjunct Professor at Clemson University, SC, USA since 2010. He
holds several patents in the field of image processing. His current research interests include deep
learning, inverse problems and wavelet analysis with an emphasis on applications in medical imaging.
Peter Maass was awarded an honorary doctorate by the University of Saarland, Germany in 2018.
\end{IEEEbiographynophoto}

\appendices
\section{Additional Numerical Testing}
\label{appendix}
This appendix displays the same results as presented in the paper but for more networks and classification tasks. All networks are MLPs with three hidden ReLU layers of equal width and a softmax output layer trained via a cross-entropy-with-logits loss function and the Adam optimizer~\cite{kingma2014adam} solving a classification task given by some data set. They were all initialized with the Glorot initialization. The plots in each figure from left to right are the singular values of the linear map, the numerical bounds of the ReLU singular values (like in Figure~\ref{fig:singular_values_vs_relu_singular_values}), and the Gaussian mean width over the training (like in Figure~\ref{fig:gmw_over_time}). Solid lines are plots corresponding to correctly classified data, and dashed lines correspond to incorrectly classified data. We turned the Boston housing prices data set~\cite{harrison1978hedonic} into the classification task of classifying whether a house costs more than most houses.

\begin{figure*}
\begin{subfigure}{0.3\textwidth}
\includegraphics[width=\linewidth, trim={0 0 0 8mm}, clip]{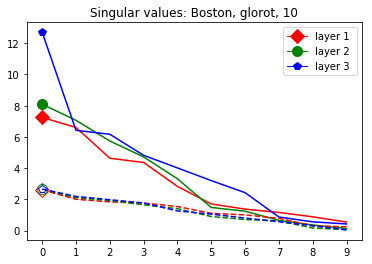}
\end{subfigure}\hspace*{\fill}
\begin{subfigure}{0.3\textwidth}
\includegraphics[width=\linewidth, trim={0 0 0 7mm}, clip]{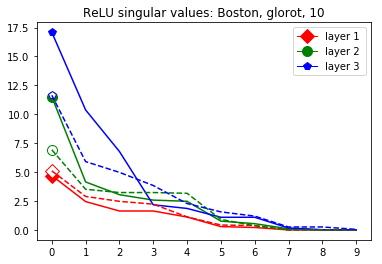}
\end{subfigure}\hspace*{\fill}
\begin{subfigure}{0.3\textwidth}
\includegraphics[width=\linewidth, trim={0 0 0 8mm}, clip]{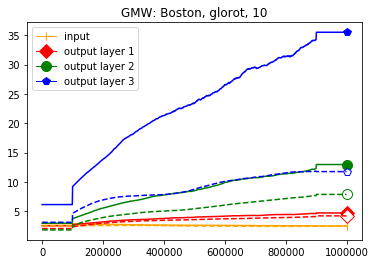}
\end{subfigure}
\caption{This network was trained on the Boston housing prices data set and the ReLU layers are of width 10.}
\end{figure*}

\begin{figure*}
\begin{subfigure}{0.3\textwidth}
\includegraphics[width=\linewidth, trim={0 0 0 8mm}, clip]{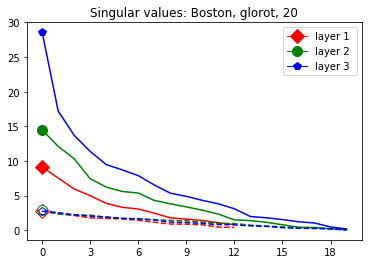}
\end{subfigure}\hspace*{\fill}
\begin{subfigure}{0.3\textwidth}
\includegraphics[width=\linewidth, trim={0 0 0 7mm}, clip]{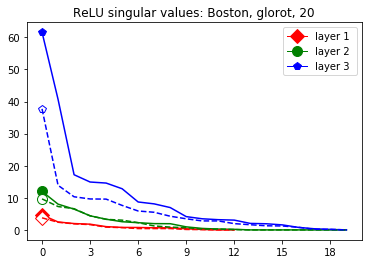}
\end{subfigure}\hspace*{\fill}
\begin{subfigure}{0.3\textwidth}
\includegraphics[width=\linewidth, trim={0 0 0 8mm}, clip]{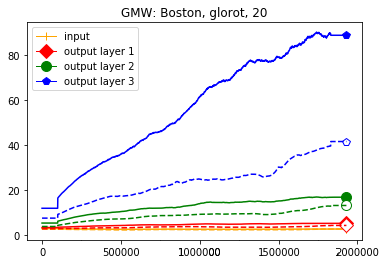}
\end{subfigure}
\caption{This network was trained on the Boston housing prices data set and the ReLU layers are of width 20.}
\end{figure*}

\begin{figure*}
\begin{subfigure}{0.3\textwidth}
\includegraphics[width=\linewidth, trim={0 0 0 8mm}, clip]{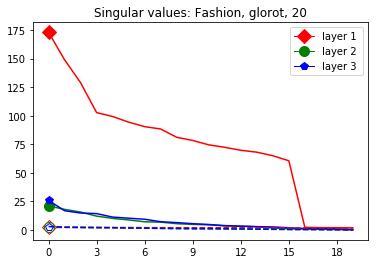}
\end{subfigure}\hspace*{\fill}
\begin{subfigure}{0.3\textwidth}
\includegraphics[width=\linewidth, trim={0 0 0 7mm}, clip]{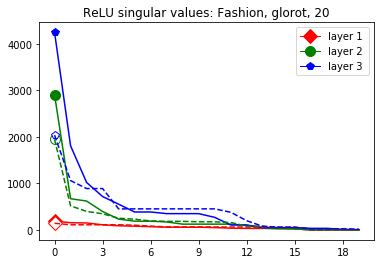}
\end{subfigure}\hspace*{\fill}
\begin{subfigure}{0.3\textwidth}
\includegraphics[width=\linewidth, trim={0 0 0 8mm}, clip]{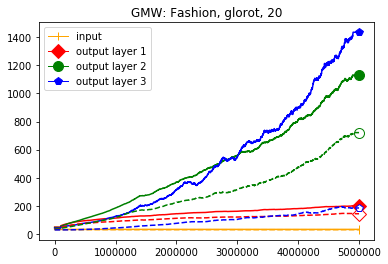}
\end{subfigure}
\caption{This network was trained on the Fashion MNIST data set and the ReLU layers are of width 20.}
\end{figure*}


\begin{figure*}
\begin{subfigure}{0.3\textwidth}
\includegraphics[width=\linewidth, trim={0 0 0 8mm}, clip]{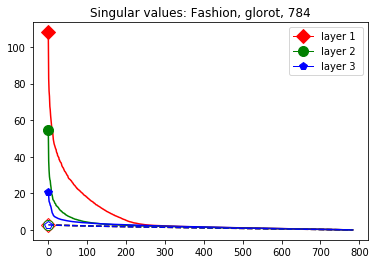}
\end{subfigure}\hspace*{\fill}
\begin{subfigure}{0.3\textwidth}
\includegraphics[width=\linewidth, trim={0 0 0 7mm}, clip]{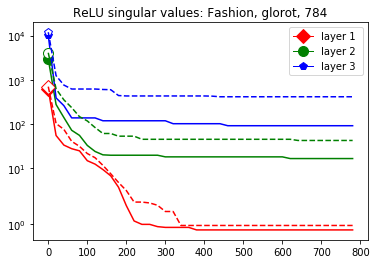}
\end{subfigure}\hspace*{\fill}
\begin{subfigure}{0.3\textwidth}
\includegraphics[width=\linewidth, trim={0 0 0 8mm}, clip]{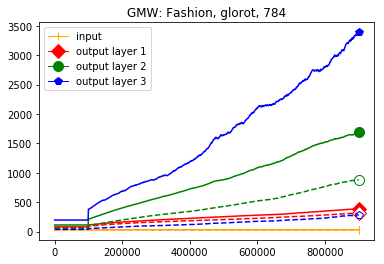}
\end{subfigure}
\caption{This network was trained on the Fashion MNIST data set and the ReLU layers are of width 784.}
\end{figure*}


\begin{figure*}
\begin{subfigure}{0.3\textwidth}
\includegraphics[width=\linewidth, trim={0 0 0 8mm}, clip]{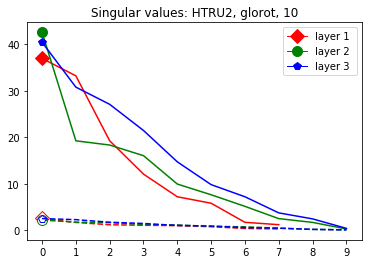}
\end{subfigure}\hspace*{\fill}
\begin{subfigure}{0.3\textwidth}
\includegraphics[width=\linewidth, trim={0 0 0 7mm}, clip]{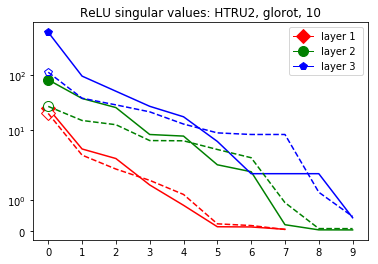}
\end{subfigure}\hspace*{\fill}
\begin{subfigure}{0.3\textwidth}
\includegraphics[width=\linewidth, trim={0 0 0 8mm}, clip]{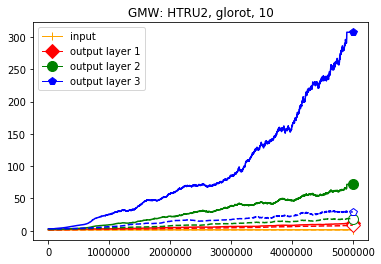}
\end{subfigure}
\caption{This network was trained on the HTRU2 data set and the ReLU layers are of width 10.}
\end{figure*}


\begin{figure*}
\begin{subfigure}{0.3\textwidth}
\includegraphics[width=\linewidth, trim={0 0 0 8mm}, clip]{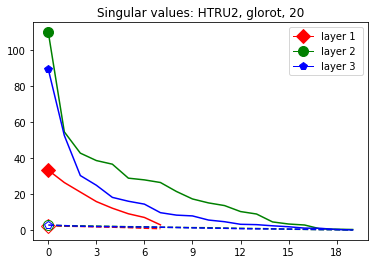}
\end{subfigure}\hspace*{\fill}
\begin{subfigure}{0.3\textwidth}
\includegraphics[width=\linewidth, trim={0 0 0 7mm}, clip]{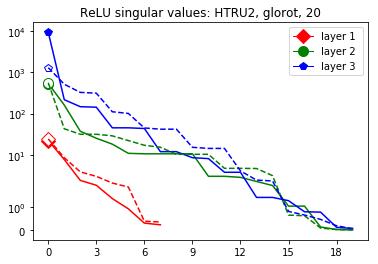}
\end{subfigure}\hspace*{\fill}
\begin{subfigure}{0.3\textwidth}
\includegraphics[width=\linewidth, trim={0 0 0 8mm}, clip]{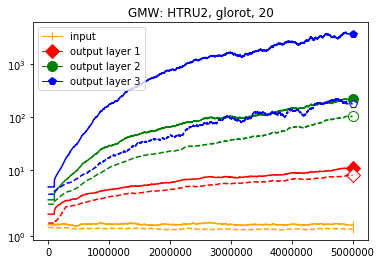}
\end{subfigure}
\caption{This network was trained on the HTRU2 data set and the ReLU layers are of width 20.}
\end{figure*}

\begin{figure*}
\begin{subfigure}{0.3\textwidth}
\includegraphics[width=\linewidth, trim={0 0 0 8mm}, clip]{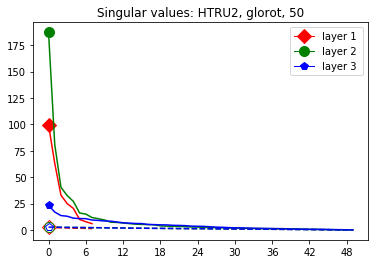}
\end{subfigure}\hspace*{\fill}
\begin{subfigure}{0.3\textwidth}
\includegraphics[width=\linewidth, trim={0 0 0 7mm}, clip]{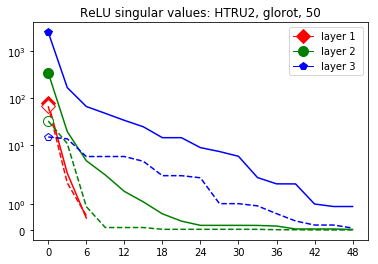}
\end{subfigure}\hspace*{\fill}
\begin{subfigure}{0.3\textwidth}
\includegraphics[width=\linewidth, trim={0 0 0 8mm}, clip]{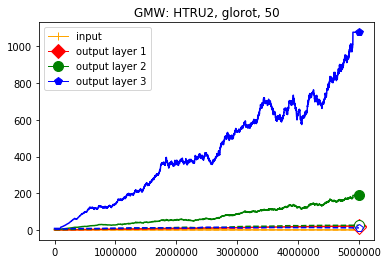}
\end{subfigure}
\caption{This network was trained on the HTRU2 data set and the ReLU layers are of width 50.}
\end{figure*}

\begin{figure*}
\begin{subfigure}{0.3\textwidth}
\includegraphics[width=\linewidth, trim={0 0 0 8mm}, clip]{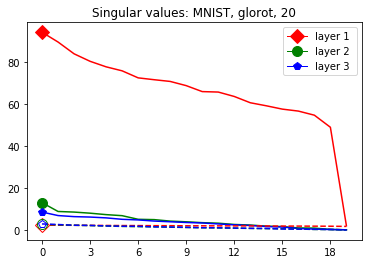}
\end{subfigure}\hspace*{\fill}
\begin{subfigure}{0.3\textwidth}
\includegraphics[width=\linewidth, trim={0 0 0 7mm}, clip]{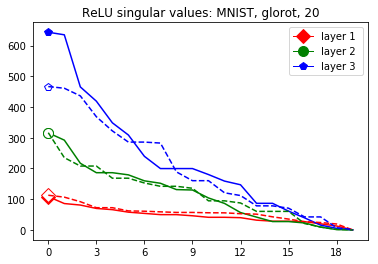}
\end{subfigure}\hspace*{\fill}
\begin{subfigure}{0.3\textwidth}
\includegraphics[width=\linewidth, trim={0 0 0 8mm}, clip]{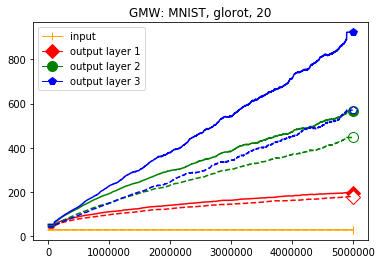}
\end{subfigure}
\caption{This network was trained on the MNIST data set and the ReLU layers are of width 20.}
\end{figure*}

\begin{figure*}
\begin{subfigure}{0.3\textwidth}
\includegraphics[width=\linewidth, trim={0 0 0 8mm}, clip]{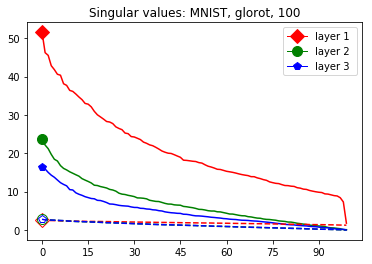}
\end{subfigure}\hspace*{\fill}
\begin{subfigure}{0.3\textwidth}
\includegraphics[width=\linewidth, trim={0 0 0 7mm}, clip]{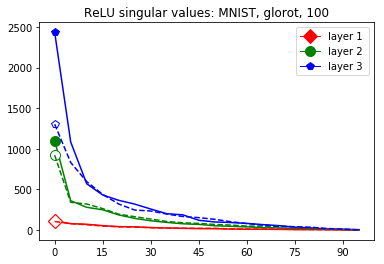}
\end{subfigure}\hspace*{\fill}
\begin{subfigure}{0.3\textwidth}
\includegraphics[width=\linewidth, trim={0 0 0 8mm}, clip]{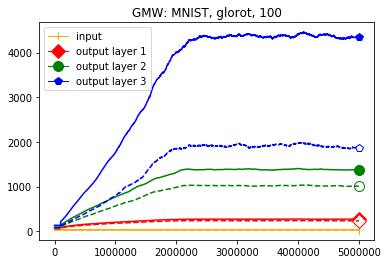}
\end{subfigure}
\caption{This network was trained on the MNIST data set and the ReLU layers are of width 100.}
\end{figure*}

\begin{figure*}
\begin{subfigure}{0.3\textwidth}
\includegraphics[width=\linewidth, trim={0 0 0 8mm}, clip]{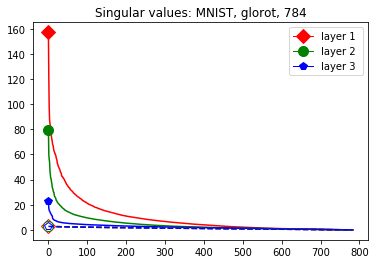}
\end{subfigure}\hspace*{\fill}
\begin{subfigure}{0.3\textwidth}
\includegraphics[width=\linewidth, trim={0 0 0 7mm}, clip]{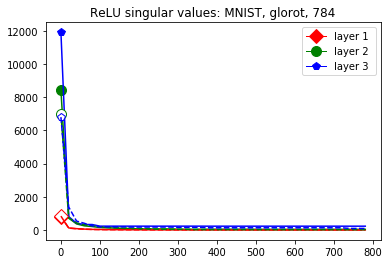}
\end{subfigure}\hspace*{\fill}
\begin{subfigure}{0.3\textwidth}
\includegraphics[width=\linewidth, trim={0 0 0 8mm}, clip]{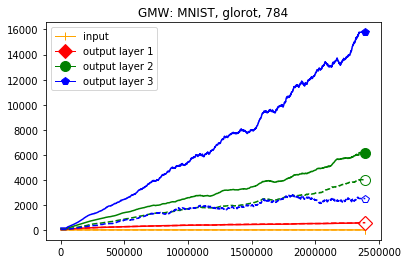}
\end{subfigure}
\caption{This network was trained on the MNIST data set and the ReLU layers are of width 784.}
\end{figure*}

\end{document}